\documentclass[twoside,11pt]{article}

\usepackage[preprint]{jmlr2e}

\usepackage{lastpage}
\jmlrheading{26}{2025}{1-\pageref{LastPage}}{1/21; Revised 5/22}{9/22}{21-0000}{Roman Pogodin, Antonin Schrab, Yazhe Li, Danica J. Sutherland, Arthur Gretton. $^*$Work done at Gatsby Computational Neuroscience Unit, University College London and Google DeepMind}

\ShortHeadings{SplitKCI}{Pogodin, Schrab, Li, Sutherland, Gretton}
\firstpageno{1}

\usepackage{our_style}
\newtoggle{inappendix}

\setlength\intextsep{0pt}

\newcommand{\changed}[1]{{#1}}

\usepackage{algorithm2e}
\Crefname{algocf}{Algorithm}{Algorithms}
\SetKwComment{tcp}{\# }{}

\crefname{equation}{Eq.}{Eqs.}
\crefname{figure}{Fig.}{Figs.}

\begin{document}

\title{Practical Kernel Tests of Conditional Independence}

\author{\name Roman Pogodin \email rmn.pogodin@gmail.com \\
       \addr McGill University and Mila
       \AND
       \name Antonin Schrab \email a.schrab@ucl.ac.uk \\
       \addr Gatsby Computational Neuroscience Unit,  University College London\\
       Centre for Artificial Intelligence, University College London
       \AND
       Yazhe Li \email yazheli@outlook.com\\
       \addr Microsoft AI$\,^*$
       \AND 
       Danica J. Sutherland \email dsuth@cs.ubc.ca\\
       \addr University of British Columbia and Amii
       \AND 
       Arthur Gretton \email arthur.gretton@gmail.com \\
       \addr Gatsby Computational Neuroscience Unit, University College London\\
       Google DeepMind
       }

\editor{My editor}

\maketitle

\begin{abstract}%
We describe a data-efficient, kernel-based approach to statistical testing of conditional independence. A major challenge of conditional independence testing is to obtain the correct test level (the specified upper bound on the rate of false positives), while still attaining competitive test power. Excess false positives arise due to bias in the test statistic, which is in our case obtained using nonparametric kernel ridge regression. We propose SplitKCI, an automated method for bias control for the Kernel-based Conditional Independence (KCI) test based on data splitting. We show that our approach significantly improves test level control for KCI without sacrificing test power, both theoretically and for synthetic and real-world data.
\end{abstract}

\begin{keywords}
  conditional independence testing, statistical testing, debiasing, type I error control, kernel methods
\end{keywords}

\section{Introduction}
\label{sec:intro}

Conditional independence (CI) testing is a fundamental problem in statistics: given a confounder $\C$, are two random variables $\A$ and $\B$ dependent?
Applications include basic scientific questions
(``is money associated with self-reported happiness, controlling for socioeconomic status?'');
evaluating the fairness of machine learning methods (the popular \emph{equalized odds} criterion \citep{eq-odds} checks that predictions are independent of a protected status, given the true label);
causal discovery (where conditional independence testing is a core sub-routine of many causal structure discovery methods, building on the PC algorithm \citep{spirtes00,pearl10,sun07kPC,tillman09kPC}),
and more.
If $\C$ takes on a small number of discrete values and sufficient data points are available,
the problem can be essentially reduced to unconditional independence testing for each $\C$ value.
Often, however, $\C$ takes on continuous values and/or has complex structure;
it is then a major challenge to determine how to ``share information'' across similar values of $\C$,
without making strong assumptions about the form of dependence among the variables.

In simple settings,
for instance if $(\A, \B, \C)$ are jointly Gaussian,
conditional independence can be characterized by the
\emph{partial correlation} between $\A$ and $\B$ given $\C$, i.e.
the correlation between residuals of linear regressions $\C \to \A$ and $\C \to \B$.
One family of CI tests extends this insight to nonlinear generalizations of partial correlation,
allowing the detection of nonlinear dependencies,
as suggested by the general characterization of conditional independence of 
\citet{daudin1980partial}.
We consider specifically the Kernel-based Conditional Independence (KCI) \citep{zhang2012kernel} statistic;
and a successor, the Conditional Independence Regression CovariancE (CIRCE) \citep{pogodin2022efficient},
both described in \cref{sec:kernelDep}.
These methods replace the residuals of linear regression with a kernel analogue,
derived from (infinite-dimensional) conditional feature means \citep{SonHuaSmoFuk09,grunewalder2012conditional,li2023optimal};
they then use a kernel-based alternative to covariance
\citep{gretton2005measuring,gretton2007kernel} %
to detect dependence between these residuals.
Given the true conditional feature means and sufficient data for testing,
these methods can detect general modes of conditional dependence, not just linear correlation.

Conditional feature mean estimation from  data is not always a simple task, however: it is difficult in low-data regimes, and for some conditional dependencies it may even be impossible.
In particular, 
nonparametric regression for conditional feature means suffers unavoidably from bias
\citep{li2023optimal,li2024towards}. This bias can easily create dependence between estimated residuals
whose population counterparts are independent, inflating the rate of false positives above the design value for the test.
Bias control is therefore essential in minimizing this effect, and controlling the rate of false positives.

We propose a set of techniques to mitigate this bias,
bringing conditional independence tests closer to their nominal level.
The key to our approach may be understood by comparing KCI with CIRCE, keeping in mind  that both are zero only at conditional independence. The KCI is a covariance of residuals, and thus two regressions are needed in order to compute it (regression from $\C$ to $\A$, and from $\C$ to $\B$). CIRCE, by contrast, requires only regression from  $\C$ to $\B$,  and computes the covariance of this residual with $\A$ alone. In other words, the second regression in KCI, from $\C$ to $\A$, is not necessary for consistency of the test -- both statistics are zero
in population only at conditional independence.
Where there is no true signal (at CI) and the regressions are independent, however, it is natural to expect that {\em residuals} from a regression {\em to} $\A$ may ``appear less dependent'' than $\A$ itself, debiasing the overall test (as formalized in \cref{sec:theory}).
Thus, even imperfect $\C\rightarrow\A$ regression can reduce  bias of the test statistic, and improve the Type I error (false detection rate), without compromising test consistency.

\changed{We introduce SplitKCI, a test statistic that (like KCI) uses both $\C\rightarrow\A$ and $\C\rightarrow\B$ regressions, but builds two independent estimators for one of the regressions. This reduces the effect of the bias from conditional mean estimation on the test statistic, leading to better Type I error control. To also achieve low Type II errors, we introduce a train/test data splitting heuristic that, in practice, improves the ability of the test to maintain its desired Type I error rate without sacrificing much in Type II error. Finally, we prove that the wild bootstrap technique for finding $p$-values for kernel-based test is applicable to KCI variance with empirical conditional mean estimates, improving over the commonly-used gamma approximation.}

\textbf{Testing terminology.}
The null hypothesis (denoted $\hzero$) is $\hzero\!:\! \A\!\indep\!\B\given\C$, that $\A$ is independent of $\B$ given $\C$. The alternative is $\hone\!:\! \A\!\nindep\!\B\given\C$. False positives (rejecting $\hzero$ when it holds) are called Type I errors, and should be bounded by a fixed level $\alpha$, e.g.\ $0.05$. False negatives (failing to reject $\hzero$ under $\hone$) are called Type II errors. Test power is the true positive rate, i.e.\ 1 minus the Type II error.

The rest of the paper is organised as follows. \cref{sec:related_work} discusses related work. \cref{seq:regression_ci} presents the concept of regression-based conditional independence tests. \cref{sec:kernelDep} introduces kernel methods, and in particular kernel methods-based tests. \cref{sec:theory} presents SplitKCI, our approach to conditional independence testing, and discusses the motivation for changes w.r.t.\ the original KCI test. \cref{sec:testing} discusses how  to test for conditional independence with SplitKCI. \cref{sec:experiments} presents experiments on synthetic and real data. \cref{sec:discussion} concludes the paper with a discussion.

\section{Related work}\label{sec:related_work}

Uniformly valid conditional independence tests (with a correctly controlled false positive rate for any distribution) do not exist \citep{shah2020hardness}. \changed{Unlike for for unconditional independence \citep{gretton2007kernel,albert2022adaptive}}, $\A$ and $\B$ may ``seem'' conditionally dependent
until enough data is available to identify that $\C$ actually determines the dependence in a complex way;
for instance,
the construction of \citet{shah2020hardness} essentially ``hides'' information on $\A$ in lower-order bits of $\C$.
Thus a test can never ``commit'' to conditional dependence (or to conditional independence),
in contrast to unconditional settings where tests can successfully ``commit'' to dependence (rejecting the null).
Thus, methods for CI testing must make (often implicit) assumptions on the underlying class of distributions: that is, they work only for a restricted set of nulls and alternatives.

A large body of work uses regression to estimate conditional dependence of $\A$ and $\B$ given $\C$. This is achieved by building the test statistic from the respective residuals of $\C \to \A$ and $\C \to \B$ regressions. Several tests use finite-dimensional residuals, such as the Generalised Covariance Measure (GCM) \citep{shah2020hardness}, weighted GCM \citep{scheidegger2022weighted}, the Projected Covariance Measure (PCM) \citep{lundborg2022projected}, and the Rao-Blackwellized Predictor Test (RBPT) \citep{polo2023conditional}. Kernel-based tests \citep{sun07kPC,fukumizu2007kernel,strobl2019approximate,park2020measure,Huang22condDepend,scetbon2022asymptotic}  instead rely on infinite-dimensional residuals (also see kernelized GCM, \citealp{fernández2022general}). The latter can generally capture more complicated conditional dependencies, leading to better test power. As we will see below, however, they can struggle with correctly identifying conditional independence. In all cases, the type of regression used for each test is a practical choice; one common choice, including in this paper, is kernel ridge regression, due to its theoretical guarantees for conditional expectation estimation \citep{li2023optimal, li2024towards}.

Another class of tests considers local permutations based on clusters of the conditioning variable $\C$ \citep{fukumizu2007kernel,sen2017model,neykov2021minimax}. This is done to simulate the null and compare it to the test data. However, test power in such approaches is sensitive to the marginal distribution of the conditioning variable $\C$ \citep{kim2022local}.
These approaches can be thought of roughly as
using a regression function from $\C$ to $\A$ and/or $\B$ that is piecewise-constant on these clusters in $\C$.

CI tests are typically equipped with an approximate way to compute p-values based on the specific properties of each statistic. In situations where one can (approximately) sample from either the $P(\A\given \C)$ or the $P(\B\given \C)$ distribution, several methods can guarantee level control by sampling from such distributions for any test statistic (e.g. CRT \citep{candes2018panning}, CPT \citep{berrett2020conditional}; see for instance the conditional dependence coefficient of \citet{azadkia2021simple} with a test based on CRT \citep{shi2024azadkia}).
Given the ability to sample from these conditional distributions,
one could also find an arbitrarily accurate regression;
our aim is to improve the performance of tests for situations in which this sampling is not possible,
due to a lack of data relative to the complexity of the conditional dependence.

\section{Introduction to regression-based conditional independence testing}\label{seq:regression_ci}

To define our conditional dependence measures,
we will first need to characterize conditional independence. \changed{Based on the results of \citet{daudin1980partial}, we will define conditional independence through conditional expectations. In practice, conditional expectation can be approximated with regression methods, giving rise to regression-based CI testing methods.}

The definitions below are given for expectations of square-integrable test functions;
considering the case where these functions are 0-1 indicators of events
yields the more familiar version based on factoring probabilities.

\begin{definition}[\citealp{daudin1980partial}] Random variables $\A$ and $\B$ are independent conditioned on $\C$, denoted $\A\indep \B\given \C$, if 
\begin{equation*}
    \EE{g(\A, \C) \, h(\B, \C) \given \C} = \EE{g(\A, \C) \given \C}\,\EE{h(\B, \C) \given \C}
\end{equation*}
    $\C$-almost surely, for all square integrable functions $g\in L^2_{\A\C}$ and $h\in L^2_{\B\C}$.
\end{definition}

\begin{theorem}[\citealp{daudin1980partial}]
    \label{th:daudin}
    Each of the following conditions hold if and only if $\A \indep \B \given \C$,\footnote{The condition \eqref{eq:daudin-one-and-a-half} is not explicitly stated by \citet{daudin1980partial}, but follows from \eqref{eq:daudin-one}; see \citet{pogodin2022efficient}, summarised in \cref{app:sec:kci}.}
    where in each case the test functions $h$ are in $L^2_{BC}$
    (i.e.\ $\EE{ h(B, C)^2 } < \infty$),
    and the test functions $g$ are in $L^2_{AC}$ for \eqref{eq:daudin-both}
    or $L^2_{A}$ for \eqref{eq:daudin-one-and-a-half} and \eqref{eq:daudin-one}.
    \begin{alignat}{3}
        \EE{g(\A, \C) \, h(\B, \C)} = 0
        \quad &\forall
        g,\,h\quad \text{ s.t. } \EE{g(\A, \C) \given \C}=0 \text{ and } \EE{h(\B, \C) \given \C} = 0
        \label{eq:daudin-both}
        ;\\
        \EE{g(\A) \, h(\B, \C)} = 0
        \quad &\forall
        g,\,h\quad \text{ s.t. } \EE{g(\A)\given \C}=0 \text{ and } \EE{h(\B, \C) \given \C} = 0
        \label{eq:daudin-one-and-a-half}
        ;\\
        \EE{g(\A) \, h(\B, \C)} = 0
        \quad
         &\forall
        g,\,h\quad \text{ s.t. } \EE{h(\B, \C) \given \C} = 0
        \label{eq:daudin-one}
    .\end{alignat}
\end{theorem}

In practice, \cref{th:daudin} implies that we can select functions $g$ and $h$ and test whether they correlate in order to determine conditional dependence. More specifically, we can explicitly centre two $L^2$ functions (akin to \cref{eq:daudin-both}) and construct a test statistic on the basis of
\begin{equation}
    \frac{1}{n}\sum_{i=1}^n
    \bigl[ g(\a_i, \c_i) - \EE{g(\A, \C) \given \C=\c_i} \bigr]
    \bigl[h(\b_i, \c_i) - \EE{h(\B, \C) \given \C=\c_i}\bigr]
    \label{eq:generic_regression_test}\,.
\end{equation}

Several tests follow this general approach:
for instance, GCM \citep{shah2020hardness} uses $g(\A,\C)=\A, h(\B,\C)=\B$.
Weighted GCM \citep{scheidegger2022weighted} and the Projected Covariance Measure \citep{lundborg2022projected} extend this to non-linear dependencies for better test power.
KCI \citep{zhang2012kernel}, the focus of our study, uses a different form of dependence, to be discussed shortly.
For any choice of $g$ and $h$, under conditional independence the statistic of \eqref{eq:generic_regression_test} will straightforwardly converge to zero with increasing $n$.
Absent conditional independence, however, the behaviour of \eqref{eq:generic_regression_test} is determined by $g$ and $h$. For instance, consider $\A=\C+\xi$, $\B=\C+\xi^2$ for a standard normal $\xi$. If $g$ and $h$ are linear, the test for conditional independence would amount to testing whether $\xi$ and $\xi^2$ are correlated. As they are not, this test would fail to reject the null hypothesis of conditional independence. The kernel-based methods that we consider here address this issue by considering all possible $g$ and $h$ (in a dense subset of $L^2$),
transforming the scalar-valued \eqref{eq:generic_regression_test} into an infinite-dimensional vector.

The category of test statistics in \eqref{eq:generic_regression_test} also requires us to compute the conditional expectation for the given functions. What happens if we estimate the conditional expectations incorrectly? In general, the test statistic might hallucinate dependence. Denote $\muac = \EE{g(\A, \C) \given \C}$ and $\mubc = \EE{g(\B, \C) \given \C}$, and our estimates of these quantities $\hatmuac,\hatmubc$. Under conditional independence, our estimate of \eqref{eq:generic_regression_test} becomes
\begin{align*}
        \frac{1}{n}\sum_i \sqbrackets{g(\a_i, \c_i) - \hatmuac(\c_i)}&\sqbrackets{h(\b_i, \c_i) - \hatmubc(\c_i)}
       \\&\!\!\!\!\!\!= \frac{1}{n}\sum_i \sqbrackets{g(\a_i, \c_i) - \muac(\c_i)}\sqbrackets{h(\b_i, \c_i) - \mubc(\c_i)}
       \\&+ \frac1n \sum_i \sqbrackets{g(\a_i, \c_i) - \muac(\c_i)} \sqbrackets{\mubc(\c_i) - \hatmubc(\c_i)}
       \\&+ \frac1n \sum_i \sqbrackets{\muac(\c_i) - \hatmuac(\c_i)} \sqbrackets{h(\b_i, \c_i) - \mubc(\c_i)} 
       \\&+ \frac{1}{n}\sum_i \sqbrackets{\muac(\c_i) - \hatmuac(\c_i)}\sqbrackets{\mubc(\c_i) - \hatmubc(\c_i)}\,.
\end{align*}
The first term on the right-hand side converges to zero as $n$ grows, since $\A \indep \B \mid \C$.
The remaining terms depend on the regression errors, however, and may not converge to zero with $n$;
the last term in particular may be large for ``bad'' $g$ and $h$
whose regression errors are correlated.
This problem is exacerbated when looking for $g$ and $h$ with strong dependence under our estimated regressions,
as do the kernel-based measures we discuss next.

If both of our conditional mean estimate remain imperfect as the number of test points $n\rightarrow\infty$, tests with sufficient power will always be able to detect this ``spurious'' dependence. In particular this implies that the higher the power of the conditional independence test, the more sensitive it will be to conditional mean estimation errors. One way to combat such behaviour is by computing the test statistic and the regressions on the same data points (as suggested for GCM by  \citealt{shah2020hardness}). We experimentally observe that this approach leads to inflated Type I errors for the kernel-based tests (although our method in \cref{subsec:ttsplit} helps mitigate this; also see experiments in \cref{sec:experiments}).

\section{Kernel-based measures of conditional dependence}\label{sec:kernelDep}

Kernel methods provide a way to construct infinite-dimensional $g$ and $h$ for \eqref{eq:generic_regression_test} that, if the conditional means are estimated well, guarantee asymptotic power.
A kernel $k : \mathcal{\A} \times \mathcal{\A} \rightarrow \RR$
is a function such that
$k(\a, \a')=\dotprod{\phi(\a)}{\phi(\a')}_{\mathcal{H}_{\a}}$,
where $\phi \!:\! \mathcal\A \!\to\! \mathcal H_\a$ is called a \emph{feature map}
and $\mathcal H_\a$ is a reproducing kernel Hilbert space (RKHS) of functions $f :  \mathcal{\A} \rightarrow  \RR$.
The reproducing property of $\mathcal{H}_{\a}$ states that $\dotprod{\phi(\a)}{f}_{\mathcal{H}_{\a}}\!=\!f(\a)$ for any $f
\!\in\!\mathcal{H}_{\a}$.
For separable RKHSs $\mathcal{H}_{\a}$ and $\mathcal{H}_{\b}$, we can
define a space \citep[see e.g.][]{gretton2013introduction} of Hilbert-Schmidt operators  $A,B :  \mathcal{H}_{\a} \rightarrow \mathcal{H}_{\b}$, denoted $\mathrm{HS}(\mathcal{H}_{\a}, \mathcal{H}_{\b})$, with the inner product
\changed{
\begin{align*}
    \dotprod{A}{B}_{\mathrm{HS}(\mathcal{H}_{\a}, \mathcal{H}_{\b})} = \sum_{i\in I,\,j\in J} \dotprod{Ag_i}{h_j}_{\mathcal{H}_{\b}}\dotprod{Bg_i}{h_j}_{\mathcal{H}_{\b}}\,,
\end{align*}
where $\{g_i\}_{i\in I}$ is any orthonormal basis of $\mathcal{H}_{\a}$, and $\{h_j\}_{j\in J}$ is any orthonormal basis of $\mathcal{H}_{\b}$.
}

For any $f_\a \in \mathcal H_\a$, $f_\b \in \mathcal H_\b$,
the outer product $f_\b \otimes f_\a \in \mathrm{HS}(\mathcal{H}_{\a}, \mathcal{H}_{\b})$ is Hilbert-Schmidt;
analogously to a Euclidean vector outer product,
this outer product has that
for $g_\a \in \mathcal{H}_{\a}$,
$(f_{\b} \otimes f_{\a}) g_\a = \dotprod{f_{\a}}{g_\a}_{\mathcal{H}_{\a}} f_{\b}$
and $\dotprod{A}{f_{\b}\otimes f_{\a}}_{\mathrm{HS}}=\dotprod{f_{\b}}{A f_{\a}}_{\mathcal{H}_{\b}}$.
We will often shorten $\mathrm{HS}(\mathcal{H}_{\a}, \mathcal{H}_{\b})$ to $\mathrm{HS}$ or $\mathrm{HS}_{\a\b}$.

We can now define two conditional covariance operators \citep{fukumizu2004dimensionality},
based on the conditional mean embedding $\mu_{\A\given \C}(\C)=\EE{\phi_{\a}(\A)\given \C}$.
First, the Kernel-based Conditional Independence (KCI) operator, due to \citet{zhang2012kernel}, is
\begin{equation}
\begin{split}
    \mathfrak C_{\mathrm{KCI}} =&\, \expect\left[
        (\phi_{\a}(\A)-\mu_{\A\given \C}(\C))
        \otimes
        \phi_{\c}(\C)
        \otimes
        (\phi_{\b}(\B) - \mu_{\B\given \C}(\C))
    \right].
\end{split}
\end{equation}
This is a \emph{covariance operator} because, for any $g\in\mathcal{H}_{\a}$ and $h\in\mathcal{H}_{\b\c}$ 
(a space of functions on $\mathcal{\B}\times\mathcal{\C}$),
\begin{align*}
    \dotprod{g}{\mathfrak C_{\mathrm{KCI}}\,h}=\EE{\brackets{g(\A)-\expect[g(\A)|\C]}\brackets{h(\B,\C)-\expect[h(\B,\C)|\C]}}.
\end{align*}
Thus, if (and only if) $\mathfrak C_{\mathrm{KCI}} = 0$,
a version of \eqref{eq:daudin-one-and-a-half} where we additionally restrict $g \in \cH_{\a}$, $h \in \cH_{\b\c}$ holds.
Note that the original definition used $\phi_{\b\c}(\B,\C)\!-\!\mu_{\B\C| \C}(\C)$; if $\phi_{\b\c}(\B,\C) = \phi_{\b}(\B) \otimes \phi_c(\C)$ as in radial basis kernels, the definition here (adapted from \cite{pogodin2022efficient}) is the same but avoids estimation of $\mu_{\C| \C}(\C)=\phi_{\c}(\C)$, which is problematic as discussed in \cref{app:sec:kci}. The original paper also swapped the roles of $\A$ and $\B$.

Although \cref{th:daudin} uses $L^2$ functions,
we can define analogous RKHS statistics using an
\emph{$L^2$-universal} kernel,
one whose RKHS is dense in $L^2$;
common choices such as the Gaussian kernel 
$k(\a, \a')=\exp(-\|\a-\a'\|^2/(2\sigma^2))$
are $L^2$-universal
\citep{fukumizu2007kernel,SriFukLan11}.
By continuity,
if no RKHS functions have nonzero conditional covariance,
then the same applies to $L^2$ functions,
and so 
$\mathfrak C_{\mathrm{KCI}} = 0$ iff $\A \indep \B \given \C$.\footnote{\changed{The proof for KCI and KCI-like statistics can be straightforwardly adapted from Th. 2.5 in \citep{pogodin2022efficient}.} $L^2$-universality of the kernel is sufficient, but probably not necessary; it is not necessary for unconditional dependence \citep{gretton2015simpler,i-characteristic}.}
\changed{
To test for conditional independence, we can therefore use the squared HS norm of the covariance operator as the test statistic, since under the null,
\begin{align*}
    \hzero:\ \lVert \mathfrak C_{\mathrm{KCI}} \rVert_{HS}^2 = 0\,.
\end{align*}
}
Using the CI condition in \eqref{eq:daudin-one}, \citet{pogodin2022efficient} introduced the Conditional Independence Regression CovariancE (CIRCE) operator to avoid centring $\A$ in cases where $\phi_{\a}(\A)$ is changing,
e.g.\ because $\phi_{\a}(\A)$ incorporates learned neep neural net features as a part of the mapping:
\begin{equation}
    \mathfrak C_{\mathrm{CIRCE}} \!=\! \expect\bigl[
        \phi_{\a}(\A)
        \otimes
        \phi_{\c}(\C)
        \otimes
        (\phi_{\b}(\B) - \mu_{\B | \C}(\C))
    \bigr].
    \label{eq:circe}
\end{equation}

\subsection{Conditional mean embeddings via kernel ridge regression}
The main challenge of using KCI is in estimating the conditional mean embeddings (CME) $\muac$ and/or $\mubc$ from data. The standard way to estimate CMEs is through kernel ridge regression (KRR) \citep{SonHuaSmoFuk09,grunewalder2012conditional}. Using $m$ points, $\lambda > 0$ as the ridge regression parameter, and with some abuse of notation denoting $\Phi_{\C}$ to be an operator from $\RR^m$ to $\mathcal{H}_{\c}$ such that $\Phi_{\C}h = \sum_{i=1}^m h_i \,\phi(\c_i)$ and analogously for $\Phi_{\A}$, the KRR estimate of the CME is
\begin{equation}
    \label{eq:cme_operator}
    \hatmuac = \Phi_{\C}(K_{\C} + \lambda\, m\, I_m)^{-1}\Phi_{\A}\,.
\end{equation}
At any point $\c$, the CME is therefore estimated as
\begin{equation}
    \label{eq:cme}
    \hatmuac(\c) = K_{\c\C}(K_{\C} + \lambda\, m\, I_m)^{-1}\Phi_{\A}\,,
\end{equation}
where the $m$-dimensional vector $K_{\c\C}$ is evaluated as $(K_{\c\C})_i = k_\c(\c, \c_i)$.

Performance of KRR for this task is well-studied, making it a useful tool for theoretical analysis of tests that rely on conditional means. As shown by \citet{li2023optimal}, Th. 2, for RKHS-valued $g$ and $h$, KRR converges as  $O( m^{-\frac{\beta - \gamma}{2(\beta+p)}})$ (which is shown to be minimax optimal). Informally speaking, $\beta\in[1, 2]$ and $p>0$ determine the smoothness of the conditional mean operators, and $\gamma$ determines the norm in which we consider KRR convergence. For the $L^2$ norm, applicable to finite-dimensional $g$ and $h$, we have $\gamma=0$ and thus the best-case convergence rate can be arbitrarily close to $O(m^{-1/2})$. For RKHS-valued $g$ and $h$, however, we need to consider RKHS norm convergence corresponding to $\gamma=1$. As a result, the best-case convergence rate is (arbitrarily close to) $O(m^{-1/4})$. In both cases, $p \to \infty$ makes convergence arbitrarily slow.

\section{SplitKCI}\label{sec:theory}

As discussed above, CME estimation in an RKHS with $m$ data points typically converges as $O(m^{-1/4})$ or slower. In contrast, for a fixed CME, the estimate of the KCI statistic converges as $O(n^{-1/2})$ \citep[Cor. C.6]{pogodin2022efficient}. In this section, we show that CME estimation errors appear as (spurious) dependence in the test statistic. This is an issue: for large enough $n$ (e.g.\ if $n=m$ when using the same data for both CME estimation and testing), the $O(m^{-1/4})$ noise bias in the test statistic would be significantly larger than the $O(n^{-1/2})$ variation expected under the null. Thus, this test would frequently reject the null simply due to CME estimation errors and not true conditional dependence, failing to have any level control.

A straightforward solution to this problem is to ensure that $n \ll m$.
This, however, directly reduces the power of the test to detect true conditional dependencies, as well as spurious ones.
We present two approaches to mitigate this problem that when combined, achieve both level control (low Type I error) and high power (low Type II error). In \cref{subseq:splitkci}, we present SplitKCI and demonstrate that it significantly reduces the effect of CME estimation bias on the KCI statistic. In \cref{subsec:ttsplit}, we present a heuristic for choosing the train/test split, i.e.\ $m$ and $n$, to achieve high power without sacrificing level control. 

\subsection{Reducing CME estimation bias in KCI}\label{subseq:splitkci}

We start by analysing the effect of an incorrect CME,
which introduces an additional bias into our dependence measure even at population level, $n = \infty$.
We refer to the $n$ points used for statistic evaluation as $\datatest$, the $m_{\b\c}$ points used for estimating $\mubc$ as $\datatrain{m_{\b\c}}$, and the $m_{\a\c}$ points used for estimating $\muac$ as $\datatrain{m_{\a\c}}$.
For our theoretical analysis, we assume that the test data is independent of the training data. 

We first define a generic KCI-type operator:
for $\beta: \mathcal{\C} \to \mathcal{H}_{\a}$ and $\tau: \mathcal{\C} \to \mathcal{H}_{\b}$,
\begin{equation}
    \mathfrak C\brackets{\beta,\,\tau} = \expect\left[
        (\phi_{\a}(\A)-\beta(\C))
        \otimes
        \phi_{\c}(\C)
        \otimes
        (\phi_{\b}(\B) - \tau(\C))
    \right]\,.\label{eq:generic_cov_like}
\end{equation}
Then $\mathfrak C_{\mathrm{KCI}} = \mathfrak C\brackets{\muac,\,\mubc}$ and $\mathfrak C_{\mathrm{CIRCE}} = \mathfrak C\brackets{0,\,\mubc}$.
In practice, however, we only have access to operators
based on CME estimates $\hatmuac(\c) = \muac(\c) + \widehat b_{\A\given\C}(\c)$ and an analogous $\hatmubc$.
Thus, for given CME estimators, the underlying operators we estimate in a test are
\begin{equation*}
    \widehat{\mathfrak C}_{\mathrm{KCI}} = \mathfrak C\brackets{\muac+\widehat b_{\A\given\C},\,\mubc+\widehat b_{\B\given\C}}
    ,\quad
    \widehat{\mathfrak C}_{\mathrm{CIRCE}} = \mathfrak C\brackets{0, \mubc + \widehat b_{\B\given\C}}
    \,.
\end{equation*}

Under $\hzero$, we have that $\mathfrak C_{\mathrm{KCI}} = \mathfrak C\brackets{\muac,\,\mubc}=0$, and therefore 
\begin{equation*}
    \widehat{\mathfrak C}_{\mathrm{KCI}} = \expect\left[
        \widehat b_{\A\given\C}
        \otimes
        \phi_{\c}(\C)
        \otimes
        \widehat b_{\B\given\C}
    \right]\,,
\end{equation*}
leading to a test statistic that might deviate from zero:
\begin{equation}
    \|\widehat{\mathfrak C}_{\mathrm{KCI}}\|_{\mathrm{HS}}^2
    = \expect\Big( \dotprod{\widehat b_{\A\given \C}(\C)}{\widehat b_{\A\given \C}(\C')}_{\mathcal{H}_{\a}}   k_{\c}(\C, \C')\dotprod{\widehat b_{\B\given \C}(\C)}{\widehat b_{\B\given \C}(\C')}_{\mathcal{H}_{\b}}\Big)\,,
    \label{eq:kci_hat_bias}
\end{equation}
where $\C,\C'$ are independent copies.

Thus, if the bias is ignored, the statistic will appear to indicate dependence, even though none exists---potentially leading to excessive Type I error rates.
For CIRCE, the $\langle \widehat b_{\A\given \C}(\C),\,\widehat b_{\A\given \C}(\C')\rangle_{\mathcal{H}_{\a}}$ term in the last equation is replaced
by $\dotprod{\muac(\C)}{\muac(\C')}_{\mathcal{H}_{\a}}$, yielding a potentially larger term when the regression bias is small. 
We can view the larger bias of CIRCE through the lens of double robustness \citep{chernozhukov2018double} of KCI: since conditional dependence is measured in terms of a product of the residuals, KCI only needs to get both regressions somewhat right, while CIRCE needs the second regression to be very accurate to reduce bias.

To mitigate the bias in KCI estimation, we first show that various KCI-like estimates of conditional independence asymptotically behave the same way:

\begin{restatable}{theorem}{splitkci}
    \label{th:splitkci}%
    For functions $\beta^{(1)},\beta^{(2)}: \mathcal{\C} \to \mathcal{H}_{\a}$ and $\tau^{(1)},\tau^{(2)}: \mathcal{\C} \to \mathcal{H}_{\b}$ bounded uniformly in RKHS norm across train set sizes, define the test statistic $T$ as a Hilbert-Schmidt inner product 
    
    \begin{equation*}
        T(\beta^{(1)},\tau^{(1)},\beta^{(2)},\tau^{(2)}) = \dotprod{\mathfrak C(\beta^{(1)}, \tau^{(1)})}{\mathfrak C(\beta^{(2)}, \tau^{(2)})}_{\mathrm{HS}}\,.
    \end{equation*}
    
    Under $\hzero$, $T \to 0$ if any one of $\beta^{(1)},\tau^{(1)},\beta^{(2)},\tau^{(2)}$ converges to the corresponding $\muac$ or $\mubc$ in RKHS norm.  

    Under $\hone$, $T \to t > 0$ if either both $\beta^{(1)},\beta^{(2)}\rightarrow \muac$ or both $\tau^{(1)},\tau^{(2)}\rightarrow \mubc$ in RKHS norm. 
\end{restatable}%
The proof, in \cref{app:proofs:splitkci}, shows equivalence between this measure and KCI/CIRCE.
The empirical estimator of $T$ over $n$ points follows the standard (quadratic-time) HSIC estimator (with $O(n^{-1/2})$ convergence; \citealp[see, e.g.,][Corollary C.6]{pogodin2022efficient}):
\begin{equation}
    T_n =
    \frac{1}{n^2}
    1_n\T
    \big(
    (HK^c_{\A}H) \odot K_{\C} \odot K_{\B}^c
    \big) 1_n
    \label{eq:kci_estimator}
,\end{equation}
where
$A \odot B$ is the elementwise product,
$H = I_n - \frac{1}{n} 1_n 1_n\T$ is the ``centering matrix'', and $(K_{\C})_{ij}=k_{\c}(\c_i, \c_j)$. 
In general, $K_{\A}^c$ (and, analogously, $K_{\B}^c$) is defined as 
    \begin{equation}
        (K^c_{\A})_{ij} = \dotprod{\phi_{\a}(\a_i)
    - \beta^{(1)}(\c_i)}{\phi_{\a}(\a_j)
    - \beta^{(2)}(\c_j)}\,.
    \label{eq:splitkci_matrix}
    \end{equation}
For KCI, all parts of the test statistic use CME estimates, i.e. $\beta^{(1)}=\beta^{(2)}=\hatmuac$ and $\tau^{(1)}=\tau^{(2)}=\hatmubc$. For CIRCE, $\beta^{(1)}=\beta^{(2)}=0$ and therefore $(K_\A)_{ij} = k_\a(\a_i, \a_j)$.

Furthermore, to work with symmetric matrices, in experiments we will symmetrize the measure as $\frac{1}{2}(T(\beta^{(1)},\tau^{(1)},\beta^{(2)},\tau^{(2)}) + T(\beta^{(2)},\tau^{(2)},\beta^{(1)},\tau^{(1)}))$.

As a consequence of \cref{th:splitkci}, we can choose how to estimate CME for each part of $T(\dots)$ to minimize the bias term under the null (\cref{eq:kci_hat_bias}). For that, we define SplitKCI:
\begin{definition}[SplitKCI]\label{def:splitkci}
    For CME estimates $\hatmuac^{(1)},\hatmuac^{(2)}$ trained over equal non-overlapping splits of a dataset, and analogously for $\hatmubc$, we define SplitKCI as 
    \begin{align}
        \mathrm{SplitKCI} = T\left(\hatmuac^{(1)},\ \hatmubc^{(1)},\ \hatmuac^{(2)},\ \hatmubc^{(2)}\right)\,\label{eq:splitkci_ab}
    \end{align}
\end{definition}

\Cref{eq:splitkci_ab} satisfies the conditions of \cref{th:splitkci}, meaning it asymptotically characterises conditional independence.

To analyse SplitKCI, we make standard assumptions to guarantee the CME estimator's convergence. For a space of real-valued Lebesgue square integral (with respect to measure $\pi$) functions $L_2(\pi)$ and an embedding $I_{\pi}\,:\,\mathcal{H}_\a\rightarrow L_2(\pi)$ that maps $f\in\mathcal{H}_\a$ to its $\pi$-equivalence class $[f]$, we can define interpolation spaces and the corresponding norm as follows:
\begin{definition}[$\beta$-interpolation space, \citealp{steinwart2012mercer,li2023optimal}]\label{def:beta_norm}
    For $\beta \geq 0$, a countable index set $I$, a non-increasing sequence $(\mu_i)_{i\in I}>0$, and a family $(e_i)_{i\in I}\in\mathcal{H}_\a$, such that $([e_i])_{i\in I}$ is an orthonormal basis (ONB) of $\overline{\mathrm{ran}\, I_{\pi}}\subseteq L_2(\pi)$ and $(\mu_i^{1/2}e_i)_{i\in I}$ is an ONB of $(\ker I_\pi)^{\perp}\subseteq \mathcal{H}_\a$,
    the $\beta$-interpolation space is defined as
    \begin{equation*}
        [\mathcal{H}]^\beta_\a = \left\{\sum_{i\in I}a_i\mu_i^{\beta/2}[e_i]\,:\,(a_i)_{i\in I} \in \ell_2(I)\right\}\in L_2(\pi)\,.
    \end{equation*}
    It is equipped with the $\beta$-power norm
    \begin{equation*}
        \norm{\sum_{i\in I}a_i\mu_i^{\beta/2}[e_i]}_\beta  = \norm{(a_i)_{i\in I}}_{\ell_2(I)} = \brackets{\sum_{i\in I}a_i^2}^{1/2}\,.
    \end{equation*}
\end{definition}

In particular for $\beta=0$ we have $\norm{\cdot}_0=\norm{\cdot}_{L_2(\pi)}$, and for $\beta=1$ we go back to the standard RKHS norm, i.e. $\norm{[f]}_1=\norm{f}_{\mathcal{H}_\a}$ for $f\in \mathcal{H}_\a$ (and $[f]=I_{\pi}(f)\in L_2(\pi)$). For $\beta>1$, we get a subspace of the original space with smoother operators.

For Hilbert-Schmidt spaces $\mathrm{HS}(\mathcal{H}_\a,\mathcal{H}_\b)$, we can similarly define vector-valued interpolation spaces $[\mathrm{HS}]^{\beta}$ (and the corresponding norm) by defining a mapping to $L_2(\pi,\mathcal{H}_\b)$. See Def. 3 of \citet{li2023optimal} for the full definition.

\begin{assumption}[CME rates, \citealp{fischer2020sobolev}]
    (EVD) The eigenvalues $\mu_i$ of $C_{\C\C}=\EE{\phi_{\c}(\C)\otimes\phi_{\c}(\C)}$ decay as $\mu_i\leq c_\mu \, i^{-1/p}$, for some $c_\mu>0$ and $p\in(0,1]$.
    (Boundedness) $k_{\c}(\C, \C') \le 1$.\footnote{This satisfies the EMB condition of \citet{fischer2020sobolev}, with $\alpha=1$ and $A=1$.}
    (SRC) The conditional mean operators satisfy $\muac\in[\mathrm{HS}_{\a\c}]^{\beta}, \mubc\in[\mathrm{HS}_{\b\c}]^{\beta}$ for the same $\beta\in[1, 2]$.\footnote{While $\beta>2$ is possible, convergence doesn't improve due to the saturation effect for ridge regression \citep{li2022saturation}.}
    \label{assumpt:cme}
\end{assumption}
We use the same $\beta$ for both operators only for simplicity; all proofs generalize to different interpolation space parameters $\beta_{\a\c},\beta_{\b\c}$.

\begin{restatable}[Bias in KCI and SplitKCI]{theorem}{kcibias}
Under \cref{assumpt:cme} and using independent data splits $\datatrain{m_{\a\c}}$ (for KCI), $\datatrain{\frac{m_{\a\c}}{2}(1)},\datatrain{\frac{m_{\a\c}}{2}(2)}$ (for SplitKCI), and analogously (and independently) for the $\C\rightarrow\B$ regression, the biases defined as the expectation over both the train and test data $b=\expect\, T(\dots)$ under $\hzero$ are bounded as
\begin{gather*}
    b_{\mathrm{KCI}}
    \leq
        ((1\!+\!K_1)\,\|\muac\|_{\beta}^2 \!+\! K_2)
        \, ((1\!+\!K_1)\,\|\mubc\|_{\beta}^2 \!+\! K_2)
        \, m_{\a\c}^{-\frac{\beta-1}{\beta+p}}
        \,  m_{\b\c}^{-\frac{\beta-1}{\beta+p}}
    \,,\\
    b_{\mathrm{SplitKCI}}
    \leq
        4\,
        \|\muac\|_{\beta}^2
        \,
        \|\mubc\|_{\beta}^2\, m_{\a\c}^{-\frac{\beta-1}{\beta+p}}
        \, m_{\b\c}^{-\frac{\beta-1}{\beta+p}}
    \,.
\end{gather*}
for $K_1,K_2,K_3 \!>\!0$ (independent of $m_{\a\c},m_{\b\c}$).
\label{th:kci-bias}
\end{restatable}
\begin{proof}[Proof idea]
The proof, in \cref{app:proofs:bias}, compares the bias terms using the CME estimator rates due to \citep{li2023optimal}. The KCI bound is much larger due to the constants in the upper bounds of the CME estimator -- realistically, these constants are unlikely to be tight.
The $b_{\mathrm{KCI}}$ bound is a sum of two terms (with a common multiplier):
$\|\muac\|_{\beta}^2$, coming from the bias due to non-zero ridge parameter $\lambda$,
and a (much larger due to large $K_1$) term
$K_1\,\|\muac\|_{\beta}^2 + K_2$,
which comes from $\dotprod{\hatmuac(\c_i)}{\hatmuac(\c_j)}$.
For SplitKCI, the latter term is zero, as we use two independent CME estimators in $\dotprod{\hatmuac^{(1)}(\c_i)}{\hatmuac^{(1)}(\c_j)}$; the former one is doubled since each regression uses $\frac{m_{\a\c}}{2}$ points and $2^{\frac{\beta-1}{\beta+p}} < 2$.
\end{proof}

This change is somewhat counter-intuitive: finding the CME is hard, so we want to use as much data as we can.
But due to the slow scaling with $m_{\a\c}$, halving the data doesn't hurt the overall estimate as much as decorrelating the errors improves it.
Even for SplitKCI, however,
the remaining bias term is still large unless $m_{\a\c}$ and $m_{\b\c}$ are large. This approach is similar to double cross-fitting of estimators for GCM-like measures \citep{mcclean2024double}, although in our case we have to use independent datasets for estimating the same regression twice (as opposed to fitting $\C\rightarrow\A$ and $\C\rightarrow\B$ regressions on independent datasets).

\subsection{A heuristic for choosing train/test split ratio for SplitKCI}\label{subsec:ttsplit}
So far the discussion on de-biasing of KCI has not included the crucial point we made in the introduction: for kernel-based methods, CME estimation convergence much slower than the test statistic itself: in the best case for CME, only $O(n^{-1/4})$ convergence is possible \citep{li2023optimal} (and worst-case convergence can be arbitrarily slow), as opposed to $O(n^{-1/2})$ convergence of HSIC-based statistics for a given CME \citep{gretton2007kernel}. 

Thus, if we use the same number of data points for both CME estimation and evaluation of the test statistic, we might obtain a test statistic that treats errors in the CME estimate as a significant correlation of residuals, leading to rejection of the null. \changed{In addition, using the training points (or a subset of those) for testing can introduce unexpected correlations into the test statistic (we discuss this in \cref{app:subsubseq:control}).} Therefore, we need to balance the number of train (for CME) and test (for the test itself) data points.

To the best of our knowledge, previous literature on KCI-based tests either did not use a train/test split at all, or used manually selected splits, which is impractical given that splitting influences both Type I and II errors.

One way to overcome the slow convergence rates of CME estimators is to use auxiliary data to estimate one regression, e.g. $\mubc$, very well. In some cases, such as genetic studies \citep{candes2018panning}, auxiliary $(\B,\C)$ data is available. 
Another option is to use the same data as both the train and test sets, as done in other tests, since the in-sample, i.e. train, regression errors are smaller than out-of-sample. This is the approach taken for GCM (\changed{see the discussion preceding Th. 8 of \citealt{shah2020hardness}} and further discussed by \citealt{lundborg2022projected}) and the original KCI paper \citep{zhang2012kernel}. However, since CME estimation convergence is much slower for kernel-based tests than for tests with finite-dimensional CMEs, like GCM, in-sample regression errors can still be too large to achieve level control; this is confirmed in experiments below. 

Therefore, we require a way to choose train and test data splits.
We outline two requirements for the train/test split procedure. For Type I error control, at least one CME estimator, $\hatmuac$ or $\hatmubc$, should be able to show independence from $\C$ for a chosen number of test points. For maximizing test power, we need to use as many test points as possible (while preserving the first quality).

To address the first point, we first note that for the true CME $\muac$, the $\phi_\a(\A) - \muac(\C)$ residuals should be independent of $\C$:
\begin{equation*}
    \mathfrak C_{\mathrm{train/test\ split}}^{\A\C} = \EE{(\phi_\a(\A) - \muac(\C))\otimes \phi_\c(\C)} = 0\,.
\end{equation*}
For a given train/test split, we can evaluate $\|\widehat{\mathfrak C}_{\mathrm{train/test\ split}}^{\A\C}\|^2$ (with or without train splitting into two regressions) and then compute the $p$-value for $\hzero: \|\mathfrak C_{\mathrm{train/test\ split}}^{\A\C}\|^2=0$.

We use this idea to build a heuristic for evaluating a given train/test split ratio. While this is a heuristic, we extensively evaluate it on several tasks in \cref{sec:experiments}. To choose the train/test split, we
\begin{enumerate}
    \item compute $\|\mathfrak C_{\mathrm{train/test\ split}}^{\A\C}\|^2$ and $\|\mathfrak C_{\mathrm{train/test\ split}}^{\B\C}\|^2$;
    \item compute $p$-values $p_{\A}$ and $p_{\B}$ from wild bootstrap (see \cref{subseq:wild});
    \item repeating this $r$ times, estimate the rejection rate $\omega = \frac{1}{r}\sum_i \II{p_{\A}^i \leq \alpha}\II{p_{\B}^i \leq \alpha}$.
\end{enumerate}
If $\omega$ is larger than the level $\alpha$, we conclude that the CME estimates are not accurate enough to detect conditional independence. Then, we change the split ratio to use fewer test points and repeat the procedure. The exact algorithm used in the experiments is summarized in \cref{alg:train_test_split} (in \cref{app:sec:kci}).

\section{Statistical testing with SplitKCI}\label{sec:testing}

To use KCI-like measures for statistical testing, we need to compute (approximate) $p$-values for the data. This is not straightforward: under the null, and assuming perfect estimates of $\muac$ and $\mubc$, these statistics are distributed as an infinite weighted sum of chi-squared variables \citep{zhang2012kernel,fernández2022general}, rather than the Gaussian that is often the case with other statistics. Moreover, the parameters of this mixture depend on the data distribution and the kernels, and so has to be approximated. The original KCI paper used a moment-matched Gamma approximation \citep{zhang2012kernel}, which lacks formal guarantees. In our preliminary experiments, this approximation tended to not produce correct $p$-values (\cref{fig:task1_gamma} in \cref{app:sec:kci}), so we turn to the wild bootstrap approximation \citep{wu1986jackknife,shao2010dependent,fromont2012kernels,leucht2013dependent,chwialkowski2014wild}.

\subsection{Wild bootstrap for computing p-values}\label{subseq:wild}

Following \citet{chwialkowski2014wild}, we provide wild bootstrap guarantees for an uncentered V-statistic version $\widehat V$ of \eqref{eq:kci_estimator}
(as well as analogous versions for CIRCE and SplitKCI), 
defined as
\begin{equation}
    \label{eq:psi_q_new}
    \widehat V=\frac{1}{n^2}
    1_n\T \sqbrackets{(qq\T) \odot \widehat K^c_{\A} \odot K_{\C}\odot \widehat K_{\B}^c} 1_n,
\end{equation}
where $q = 1_n$ is a vector of all ones.
We are justified in removing the centring from \cref{eq:kci_estimator}
because the estimator is asymptotically centred on the training data.
If $q$ is instead a vector of Rademacher variables ($\pm 1$ with equal probability), we refer to the wild bootstrap quantity \eqref{eq:psi_q_new} as $\widehat V^*$.

The following theorem guarantees the validity of the wild bootstrap for KCI, CIRCE, and SplitKCI, taking into account the estimation error of the CME.
To the best of our knowledge, this result is novel, including for the KCI.

\begin{restatable}[Wild bootstrap]{theorem}{wild}
    \label{theorem:wild_bootstrap}
    Assume that $m_{\a\c}$ and $m_{\b\c}$ %
    dominate $n^{\frac{\beta+p}{\beta-1}}$ asymptotically
    for $\beta$ and $p$ as in \cref{assumpt:cme}, and that $k_{\a}$, $k_{\b}$ and $k_{\c}$ are bounded by 1.
    (i) Under the null, $n\widehat V^*$ and $n\widehat V$ converge weakly to the same distribution.
    (ii) Under the alternative, $\widehat V^*$ converges to zero in probability, and $\widehat V$ converges to a positive constant.
    \label{th:wild}
\end{restatable}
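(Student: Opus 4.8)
The plan is to reduce each statement for the estimated statistics $\widehat V,\widehat B$ in \eqref{eq:psi_q_new} to the same statement for their \emph{oracle} versions --- the identical V-statistics computed with the true conditional mean embeddings ($\mubc$ for the $\B$-side, and for the $\A$-side either the true $\muac$, or $\omega_\a\equiv 0$ for CIRCE, or the deterministic population target of $\omega_\a^{(i)}$ for SplitKCI) in place of the kernel ridge estimates --- and then to invoke the wild bootstrap theory of \citet{chwialkowski2014wild} for the oracle objects. Writing $\hatmuac(\c)=\muac(\c)+\widehat b_{\A\given\C}(\c)$ (with two independent errors $\widehat b^{(1)}_{\A\given\C},\widehat b^{(2)}_{\A\given\C}$ for SplitKCI, and no $\A$-side error for CIRCE) and $\hatmubc(\c)=\mubc(\c)+\widehat b_{\B\given\C}(\c)$, and substituting into $\widehat K^c_{\A},\widehat K^c_{\B}$, both $\widehat V$ and $\widehat B$ (the latter with $q$ a Rademacher vector) split into an oracle term $\widehat V^{\mathrm{or}}$, which uses $\muac,\mubc$, plus finitely many correction terms, each carrying at least one factor $\widehat b_{\A\given\C}(\c_i)$ or $\widehat b_{\B\given\C}(\c_i)$ evaluated at a test point; we further separate each correction into its diagonal ($i=j$) and off-diagonal ($i\ne j$) parts.

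We would then bound every correction term, multiplied by $n$, in probability, under $\hzero$. The inputs are: (a) the CME convergence rates of \citet{li2023optimal} behind \cref{th:kci-bias}, which give $\EE{\|\widehat b_{\A\given\C}(\C)\|_{\mathcal{H}_{\a}}^2}\lesssim m_{\a\c}^{-(\beta-1)/(\beta+p)}$ and analogously for $\B\given\C$; (b) conditioning on the training data, under which the test residuals $\phi_\a(\A)-\muac(\C)$ and $\phi_\b(\B)-\mubc(\C)$ are i.i.d.\ with conditional mean zero given $\C$, and, under $\hzero$, conditionally independent of one another; and (c) boundedness of all kernels, hence of the features and, for fixed $\lambda$, of the estimated embeddings. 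A diagonal part is $n$ times an empirical average, so by the law of large numbers it is of the order of the matching bias moment --- in particular the pure-bias correction converges (over the test sample, given training) to $\|\widehat C_{\mathrm{KCI}}\|_{\mathrm{HS}}^2$, which \cref{th:kci-bias} bounds by $b_{\mathrm{KCI}}$ (resp.\ $b_{\mathrm{SplitKCI}}$; resp.\ a quantity of order $m_{\b\c}^{-(\beta-1)/(\beta+p)}$ for CIRCE, since it has no $\A$-regression), whereas the residual--bias cross terms have conditional mean zero under $\hzero$. Each off-diagonal part also has conditional mean zero --- because a residual is conditionally centred, or because $\EE{\epsilon_i\epsilon_j}=0$ for $i\ne j$ --- so Chebyshev's inequality controls it by the square root of an analogous bias moment. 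The hypothesis $m_{\a\c},m_{\b\c}=\omega\bigl(n^{\frac{\beta+p}{\beta-1}}\bigr)$ is exactly what is needed: it gives $m^{-(\beta-1)/(\beta+p)}=o(1/n)$, so $n\,b_{\mathrm{KCI}},n\,b_{\mathrm{SplitKCI}}\to0$ (and likewise for the CIRCE bias), and $\sqrt{n}\,m^{-(\beta-1)/(2(\beta+p))}\to0$ kills the cross-term fluctuations. This yields $n\widehat V=n\widehat V^{\mathrm{or}}+o_P(1)$ and $n\widehat B=n\widehat B^{\mathrm{or}}+o_P(1)$; under $\hone$ the same expansions show the corrections are $o_P(1)$ using only $m_{\a\c},m_{\b\c}\to\infty$.

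It remains to handle the oracle statistics. Under $\hzero$, the V-statistic kernel $h$ of $\widehat V^{\mathrm{or}}$ --- a function of two test points $x_1,x_2$ --- is degenerate: its conditional expectation given one argument is an $\mathrm{HS}$ inner product against $C_{\mathrm{KCI}}$ (resp.\ $C_{\mathrm{CIRCE}}$, resp.\ $C_{\omega_\a^{(2)}}$), which is the zero operator at conditional independence by \cref{th:daudin} (resp.\ \cref{th:splitkci}, whose argument shows $C_{\omega_\a}=0$ under $\hzero$ for \emph{any} $\omega_\a$ once the $\B$-side uses the true $\mubc$). Since the test sample is i.i.d.\ and all kernels are bounded, the conditions of \citet{chwialkowski2014wild} hold, so $n\widehat V^{\mathrm{or}}$ and $n\widehat B^{\mathrm{or}}$ converge weakly to the same limit, an infinite weighted sum of centred $\chi^2_1$ variables; with the previous step and Slutsky's theorem this gives (i). Under $\hone$ the kernel is non-degenerate, so $\widehat V^{\mathrm{or}}$ converges to the population value of the statistic, a positive constant by \cref{th:daudin} and \cref{th:splitkci}, while $\EE{\widehat B^{\mathrm{or}}\mid\mathrm{data}}=\tfrac{1}{n^2}\sum_i h(x_i,x_i)=O_P(1/n)$ and $\mathrm{Var}(\widehat B^{\mathrm{or}}\mid\mathrm{data})=O_P(1/n^2)$, so $\widehat B^{\mathrm{or}}\to0$ in probability; combined with the vanishing corrections this gives (ii).

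The main obstacle will be the second step: enumerating the finitely many correction terms and proving that each vanishes at the correct rate after multiplication by $n$. The delicate ones pair a conditionally centred test residual with an estimation bias; their diagonal contributions only enjoy a square-root CME rate, which is precisely why the required growth of $m_{\a\c},m_{\b\c}$ is $\omega(n^{\frac{\beta+p}{\beta-1}})$ rather than the weaker rate the product form of the bias in \cref{th:kci-bias} might suggest. Making the conditional-mean-zero bookkeeping rigorous enough to apply Chebyshev uniformly in $n$, and checking the (standard, i.i.d.) regularity conditions of \citet{chwialkowski2014wild} for the degenerate oracle kernel --- including for the symmetrised SplitKCI variant --- are the remaining technical points.
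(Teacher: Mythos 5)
Your high-level strategy agrees with the paper's: both reduce $\widehat V,\widehat B$ to oracle statistics computed with the true conditional mean embeddings, invoke \citet{chwialkowski2014wild} for the oracle objects, and then show that the corrections vanish at the right rate. The mechanics, however, are genuinely different, and the paper's route is substantially shorter and cleaner. Instead of expanding $n(\widehat V - V)$ and $n(\widehat B - B)$ into the many V-statistic correction terms and handling degeneracy, diagonal/off-diagonal splits, and Chebyshev bounds term by term, the paper introduces a single $\mathrm{HS}$-valued random element
\[
T_n=\tfrac{1}{n}\sum_i Q_i\,(\phi_\a(\A_i)-\muac(\C_i))\otimes\phi_\c(\C_i)\otimes(\phi_\b(\B_i)-\mubc(\C_i)),
\]
and its plug-in analogue $\widehat T_n$, observes that $\|T_n\|^2$ is $V$ or $B$ depending on $Q_i$, and shows the single estimate $\|\sqrt{n}(\widehat T_n - T_n)\|\to 0$ in probability. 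This is done by a three-term telescoping, the triangle inequality, kernel boundedness, and the high-probability RKHS-norm rate $\|\hatmuac-\muac\|=O_P\bigl(m_{\a\c}^{-(\beta-1)/(2(\beta+p))}\bigr)$ from \citet{li2023towards}; then the reverse triangle inequality $n\bigl|\|\widehat T_n\|^2-\|T_n\|^2\bigr|\le\sqrt{n}\|\widehat T_n-T_n\|\cdot(\sqrt{n}\|\widehat T_n\|+\sqrt{n}\|T_n\|)$ closes the null case (using $\sqrt{n}\|T_n\|=O_P(1)$ under $\hzero$), and the alternative case is even easier. What your more granular route would buy you, if carried through, is a potentially weaker sufficient rate: your diagnosis that the dominant error is the cross term with a ``square-root CME rate'' is actually not quite right --- after conditioning on training data, the cross terms pairing a conditionally centred residual with a bias are degenerate in at least one argument, and both their and the pure-bias term's contribution to $n(\widehat V - V)$ scale like $\sqrt{n}\,m^{-(\beta-1)/(\beta+p)}$ or $n\,m^{-2(\beta-1)/(\beta+p)}$, so $m=\omega\bigl(n^{(\beta+p)/(2(\beta-1))}\bigr)$ already suffices --- whereas the paper's triangle-inequality bound requires the stronger $m=\omega\bigl(n^{(\beta+p)/(\beta-1)}\bigr)$ stated in the theorem. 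Your approach thus sacrifices brevity but could in principle sharpen the growth condition; the paper's sacrifices tightness for a much shorter and more robust argument.
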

The proof (in \cref{app:proofs:wild-bootstrap}) generalizes the techniques developed by \citet{chwialkowski2014wild} for HSIC to KCI, CIRCE, and SplitKCI, but takes into account imperfect CME estimates.

\Cref{theorem:wild_bootstrap} shows (pointwise)
asymptotic validity of the wild bootstrap, taking into account the estimation error of the CME.
Theorem \ref{theorem:wild_bootstrap}(i) shows that the test is asymptotically \emph{well-calibrated}: the Type I error is asymptotically controlled at the desired level.
That $n \hat V^*$ converges to the right distribution for a single data sample follows analogously to Theorem 3.1 of \citet{leucht2013dependent}; see also their Algorithm.
Theorem \ref{theorem:wild_bootstrap}(ii) shows that the test is \emph{consistent}: the Type II error of any alternative tends to zero asymptotically. 

\subsection{Full test for SplitKCI} 

The full testing procedure using SplitKCI remains similar to KCI: for a chosen set of kernels over $\A,\B,\C$, we first compute CME estimates, then compute CME-centred estimates of the kernel matrices, and finally construct the test statistic and find the approximate p-value of this static. Unlike for KCI, we use the train/test split heuristic of \cref{subsec:ttsplit} to compute CME estimates over the train set and kernel matrices over the test set; for a given training set, we compute two CME estimates using equal parts of the training set (the ``Split'' in SplitKCI) to reduce estimation bias in KCI; finally, we compute p-values using wild bootstrap (\cref{subseq:wild}). The full test procedure is summarized in \cref{alg:splitkci}. Note that the estimator \eqref{eq:kci_estimator} can be replaced with an unbiased estimate \eqref{app:eq:hsic_ub}, which we found gives a slight improvement. For SplitKCI$_{\A\ \mathrm{only}}$, $\hatmubc$ is computed using the full train set $\datatrain{m}$.

\begin{figure}[h]
\centering
\input{paper_content/algorithms/splitkci_alg}
\end{figure}

\section{Experiments}\label{sec:experiments}
We perform an extensive analysis of SplitKCI and competing tests in several tasks and testing scenarios. We use two synthetic and one real datasets of various complexity and dimensionality, described in \cref{subseq:task_description}. For the first low-dimensional synthetic dataset, we analyse the train/test splitting heuristic (introduced \cref{subseq:ttsplit}) in \cref{subseq:postnonlin_ttsplit}, and then evaluate all methods on the task in \cref{subseq:postnonlin_res}. We repeat the analysis of the heuristic on high-dimensional synthetic neural data in \cref{subseq:ttsplit}, and in \cref{subseq:levelpower,subseq:auxdata} compare the methods' performance with different amounts of available data. We conclude the experimental section with analysis of real insurance data in \cref{subseq:real_data}.
Code for all experiments is available at \href{https://github.com/romanpogodin/kernel-ci-testing}{\nolinkurl{github.com/romanpogodin/kernel-ci-testing}}. %

\subsection{Description of tasks and the experimental setup}\label{subseq:task_description}
\textbf{Synthetic data from a post-nonlinear model}
First, we evaluate the methods on data from a post-nonlinear model \citep{zhang2012identifiability}, in particular a variation of the model used in the original KCI paper \citep{zhang2012kernel}. This model generates $\A$ and $\B$ as nonlinear random (but fixed for each trial) noisy functions of $\C$. While $\A$ and $\B$ are one-dimensional, $\C$ can be high-dimensional, allowing to assess robustness of testing methods to dimensionality of $\C$. 

We generate the data for a $d$-dimensional $\C\sim \NN\brackets{0, 1/d}$ as follows:
\begin{align*}
    \A = G_\a\brackets{\frac{1}{d+1}\brackets{E_\a + \sum_{i=1}^dF_{\a,i}(\C_i + \xi_{\a,i})}},\\ \B = G_\b\brackets{\frac{1}{d+1}\brackets{E_\b + \sum_{i=1}^dF_{\b,i}(\C_i + \xi_{\b,i})}}\,,
\end{align*}
where $E_\a,E_\b\sim\NN(0, 1)$, all $G$ and $F$ are random functions (sampled as random convex combinations of $f_1(x)=x$, $f_2(x)=\tanh(x)$, and $f_3(x)=\tanh(x^3)$).

We use $\xi_{\a,i}$ and $\xi_{\b,i}$ to ``hide'' conditional dependence in the first coordinate. In all cases, $\xi_{\a,i}=\xi_{\b,i}=0$ for $i > 1$.
Under $\hzero$, $\xi_{\a,0},\xi_{\b,0}\sim\NN(0, 1)$ (independently). Under $\hone$, $\xi_{\a,0}=\xi_{\b,0}\sim\NN(0, 1)$. Thus, as dimensionality of the problem increases, it becomes progressively harder to detect conditional dependence.

\textbf{Synthetic neural data}  
We evaluate the tests on high-dimensional synthetic data that mimics neural recordings. We use the RatInABox toolbox \citep{George2024}, which models behavior and neural activity of a rat freely running in a generated 2d environment. RatInABox can model several types of neurons observed experimentally; we use the \href{https://github.com/RatInABox-Lab/RatInABox/blob/main/demos/conjunctive_gridcells_example.ipynb}{head direction and conjunctive cell models} (see \cref{app:experiments} for the exact model) as they generate non-trivial dependencies in the data. As $\A$, we use head direction cells: each has a preferred orientation of the rat's head, and is only active around those orientations \citep{knierim1995place}. As $\B$, we use conjunctive cells: each has a preferred head direction similar to $\A$, but is only active in certain locations in the environment \citep{sargolini2006conjunctive}, which can be obtain as a combination of position-only grid cells and head direction cells. For both cell types, their activity can be well explained by the animal's head direction and position, which we use as the conditioning variable $\C$.
This setup is similar to the \href{https://github.com/RatInABox-Lab/RatInABox/blob/main/demos/conjunctive_gridcells_example.ipynb}{RatInABox conjunctive cells demo}, and is visualised in \cref{fig:rat_vis}.

\begin{figure}[ht]
     \centering
     \includegraphics[width=0.95\textwidth]{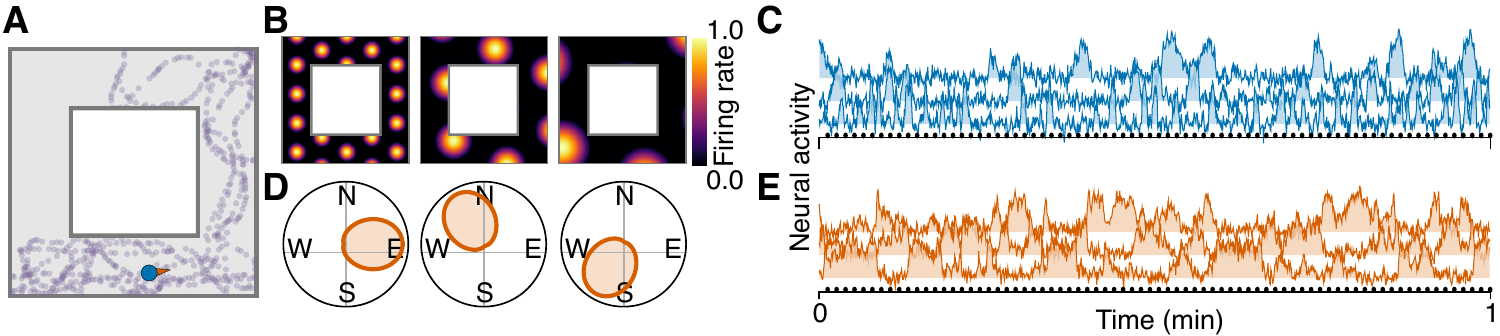}
     \caption{RatInABox data visualisation. \textbf{A.} Simulated rat trajectory in a square box with a blocked off centre. \textbf{B.} Activation patterns of three grid cells w.r.t. rat's position in the box. \textbf{C.} Simulated neural activity of the cells in \textbf{B}. Black dots indicate which data points are used in the dataset. \textbf{D.} Activation patterns of three head direction cells w.r.t. rat's head direction (polar coordinates). \textbf{E.} Same as \textbf{C}, but for the head direction cells.}
     \label{fig:rat_vis}
\end{figure}

We test the following question: is head direction selectivity of conjunctive cells $\B$ driven by the head direction cells $\A$? In other words, under the null the head direction and the conjunctive cells are independent given the actual head direction (and other behavioural variables like position that might contribute to the noise of both populations). 
Under the alternative, the head direction cells modulate activity of position-selective cells, giving rise to conjunctive cells. %

We use 100 cells for $\A$ and $\B$, and a 4-dimensional $\C$ (head direction vector and position). To generate the data, we simulate the animal's behaviour for several minutes, and then subsample the activity to obtain approximately i.i.d. data (choosing the sampling rate based on the noise autocorrelation). Other experimental details are postponed to \cref{app:experiments}.

\textbf{Car insurance data} 
Following the experiments of \cite{polo2023conditional}, we test our methods on the car insurance dataset originally collected from four US states and multiple insurance companies by \cite{angwin2017minority}.\footnote{%
Data available from \url{https://projects.propublica.org/graphics/carinsurance}.
} 

The dataset has three variables: car insurance price $\A$, minority neighborhood indicator $\B$ (defined as more than 66\% non-white in California and Texas, and more than 50\% in Missouri and Illinois), and driver's risk $\C$ (with additional equalization of driver-related variables; see \citealt{angwin2017minority}).

\textbf{Alternative tests} We compare kernel-based methods to GCM \citep{shah2020hardness} and RBPT2 \citep{polo2023conditional}. Both are regression-based methods, so we use the same set of kernels over $\C$ as for KCI variants, and do not require knowledge of $P(\B\given \C)$. 
GCM looks at correlations between $\EE{\A\given \C}$ and $\EE{\B\given \C}$ (estimated through kernel ridge regression, but with finite-dimensional outputs). RBPT2 compares how well $\A$ can be predicted by $g(\B,\C)\!=\!\EE{\A\given \B,\C}$ vs. $h\!=\!\EE{g(\B,\C)\given \C}$, also relying on kernel ridge regressions. We describe these methods in detail in \cref{app:sec:rbpt2}.
For RBPT2, we found the original method to be heavily biased against rejection of the null; we found an analytical correction and called the method RBPT2' (see \cref{app:sec:rbpt2}). Finally, we don't show CIRCE performance as it significantly underperforms vanilla KCI (see \cref{fig:task1_gamma} in \cref{app:experiments}).

\textbf{Data regimes} We evaluate all tests in two different data regimes. \textit{Standard}: $(\A,\B,\C)$ points available in triplets. \textit{Auxiliary data}: $(\A,\B,\C)$ points along with an independent set of $(\B,\C)$ points.
In both regimes, we need the data to first estimate $\hatmuac$ and $\hatmubc$, and then to evaluate the test statistic. 

\subsection{Influence of train/test splitting on KCI-style methods for a post-nonlinear model 
}\label{subseq:postnonlin_ttsplit}
First, we evaluate the train/test splitting heuristic introduced in \cref{subsec:ttsplit} on the post-nonlinear model (see \cref{subseq:task_description}). For $\alpha=0.05$ (for both test level and split threshold), we evaluate KCI and $\mathrm{SplitKCI}$ (\cref{alg:splitkci}) for test to train ratios between 0.1 and 0.8, and datast sizes $N=200$ and $N=400$.
 
For both choices of $N$, Type I error is similar between SplitKCI and KCI (\cref{fig:kci_task_budget}A, top). In contrast,  the split rejection rate (\cref{fig:kci_task_budget}A, bottom) for SplitKCI reflects its Type I error much better than for KCI, as indicated by slower raise of the split rejection rate for higher test/train ratios. 

Type II error is similar between SplitKCI and KCI across all split ratios. However, the split rejection rate again better reflects the changes in Type II error for SplitKCI. 
Therefore, SplitKCI with the train/test splitting heuristic can choose split ratios that favour smaller Type II errors (also indicated by asterisks in \cref{fig:kci_task_budget}B placed roughly where the split reject rate crosses the $\alpha=0.05$ threshold).

\begin{figure}[ht]
     \centering
     \includegraphics[width=0.95\textwidth]{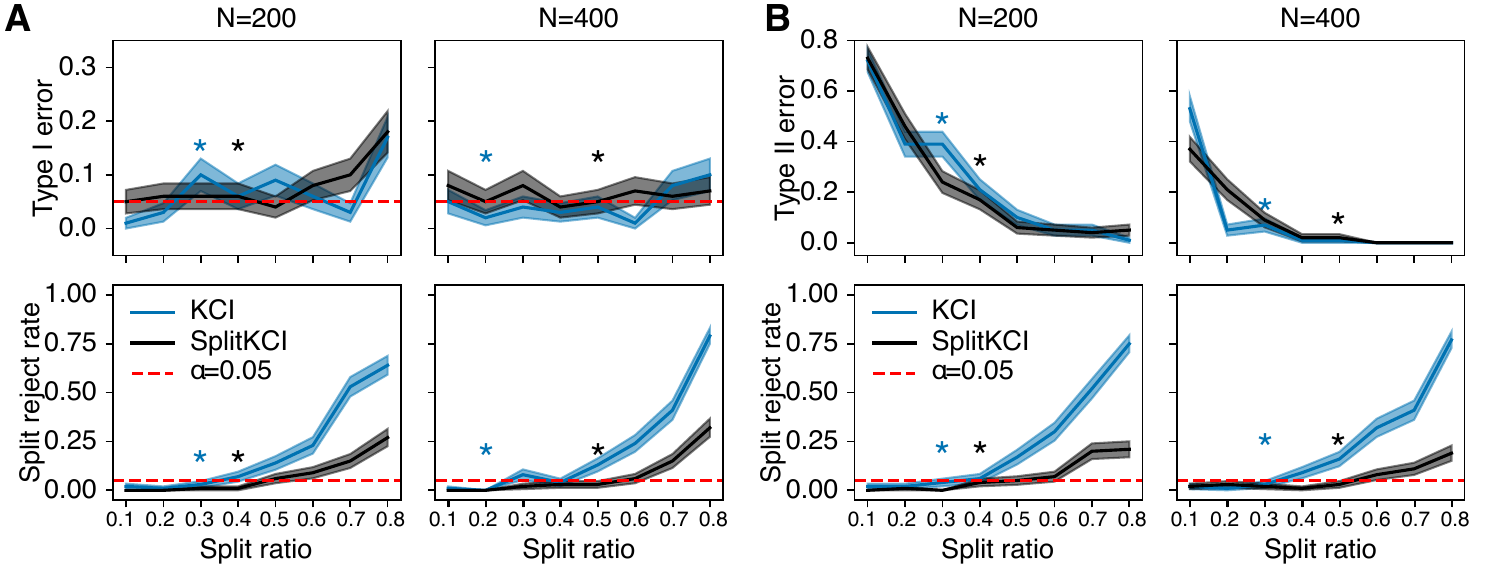}
     \caption{Train/test splitting in a post-nonlinear model with $d=4$ and a fixed dataset size $N=n+m$ (for $n$ test and $m$ training points). \textbf{A.} Type I error (top) vs. rejection rate for the train/test split heuristic (bottom) for different dataset sizes $N$ and test to train split ratios. \textbf{B.} Type II error (top) vs. rejection rate for the train/test split heuristic (bottom) for different dataset sizes $N$ and split ratios. Lines/shaded area: mean/$\pm$SE over 100 trials, $\alpha=0.05$. Asterisks: approximate split ratios from \cref{alg:train_test_split} to show performance of the splitting heuristic.}
     \label{fig:kci_task_budget}
\end{figure}

\subsection{Methods comparison for a post-nonlinear model}\label{subseq:postnonlin_res}

Next, we evaluate performance over varying number of data points $N=n+m$. For SplitKCI, we find the train/test split ratio using \cref{alg:train_test_split} for one random seed, and then evaluate the rest of the random seeds on the chosen split ratio. For KCI and RBPT2', we show results for $n=100$ test points. For GCM, we don't use splitting, as was recommended by the original paper \citep{shah2020hardness}.

Under $\hzero$ and across all dimensionalities and total number for data points, all methods generally hold level (\cref{fig:kci_task_best}A), although RBPT' and KCI are above level for some dimensionalities. As we determine the train/test split for SplitKCI using the proposed heuristic, this result shows that the heuristic can correctly find the ratio sufficient for the required Type I error.

Under $\hone$, for $N=200$ (\cref{fig:kci_task_best}B, left) KCI and SplitKCI show worse Type II error than the other two methods, although notably SplitKCI performs comparably to KCI. For $N=400$, SplitKCI performs significantly better than KCI and on part with the other two methods (\cref{fig:kci_task_best}B, right). First, this means that $N=400$ provides enough data for KCI-style methods to detect conditional dependence in this task. Second, this results shows that the train/test splitting heuristic is powerful enough to make use of the additional data, while the standard KCI with a fixed split is not.

In this task, not using a train/test split at all was a valid strategy not only for GCM, but also for KCI/SplitKCI (see \cref{app:fig:kci_task_no_split} in the Appendix): just like GCM, KCI/SplitKCI hold level and achieve nearly zero Type II error. However, as we will see in the next section, this strategy only works for simple tasks. 

\begin{figure}[ht]
     \centering
     \includegraphics[width=0.95\textwidth]{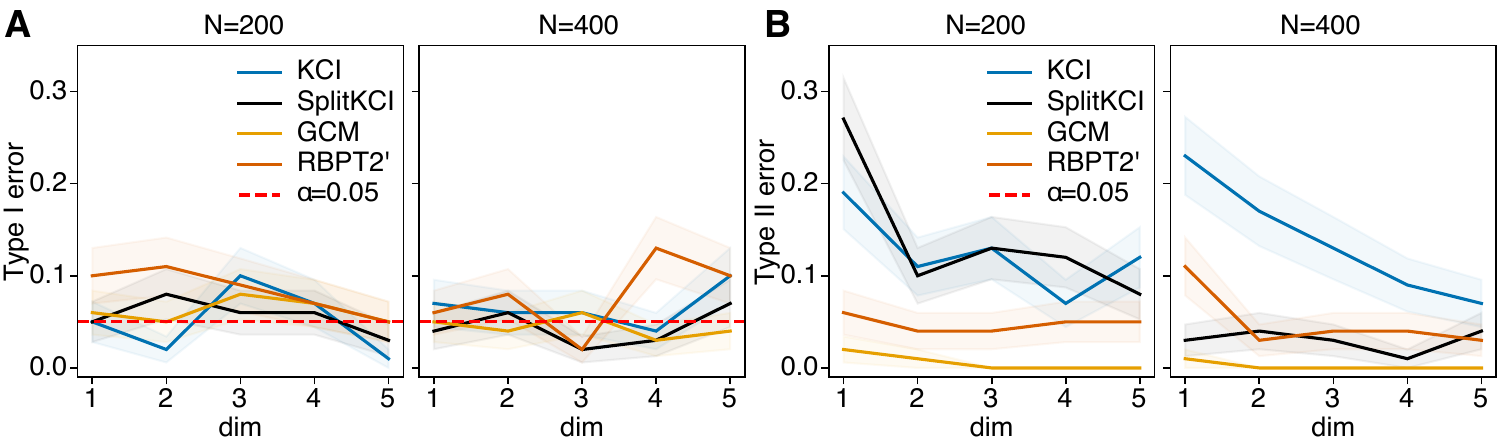}
     \caption{Post-nonlinear model experiments for increasing dimensionality of the task and a fixed dataset size $N=n+m$ (for $n$ test and $m$ training points). \textbf{A.} Type I error for $N=200$ (left) and $N=400$ (right) data points. \textbf{B.} Type II error for $N=200$ (left) and $N=400$ (right) data points. Lines/shaded area: mean/$\pm$SE over 100 trials, $\alpha=0.05$.}
     \label{fig:kci_task_best}
\end{figure}

\subsection{Influence of train/test splitting on KCI-style methods for synthetic neural data}\label{subseq:ttsplit}

For the synthetic neural data, we again first  we evaluate the train/test splitting heuristic introduced in \cref{subsec:ttsplit}. For $\alpha=0.05$ (for both test level and split threshold), we evaluate KCI and $\mathrm{SplitKCI}$ for test to train ratios between 0.1 and 0.8, for three dataset sizes (200, 500, 1000).

For all budget sizes, $\mathrm{SplitKCI}$ shows Type I error dynamics which is significantly better than for regular KCI (\cref{fig:budget_h0}, top). For KCI, the split rejection rate (\cref{fig:budget_h0}, bottom, blue lines) steadily increases with split size, meaning that it cannot serve as a reliable measure of Type I error. In contrast, for $\mathrm{SplitKCI}$ with 500 and 1000 data points (\cref{fig:budget_h0}, bottom, black lines), the split rejection rate can reliably select split ratios that result in Type I error control. 

For Type II error control, both methods show similar trends (\cref{fig:budget_h1}): $\mathrm{SplitKCI}$ can reliably select split ratios that result in small Type II error, while KCI cannot (which is also indicated by asterisks in \cref{fig:budget_h1} placed roughly where the split reject rate crosses the $\alpha=0.05$ threshold).

These results show interplay of several factors that affect SplitKCI performance. Compared to KCI, CME estimation bias has less effect on SplitKCI, leading to lower Type I errors. For the same reason, the proposed heuristic for choosing the train/test split ratio better reflects Type I performance. Finally, lower bias allows to use more test points, which in turn leads to lower Type II errors as the test becomes more sensitive to true conditional dependence.  More generally, these results confirm our discussion in \cref{sec:theory} about the effect of the number of train points $m$ and test points $n$: larger $n / m$ always leads to higher Type I errors.

\begin{figure}[ht]
     \centering
     \includegraphics[width=0.95\textwidth]{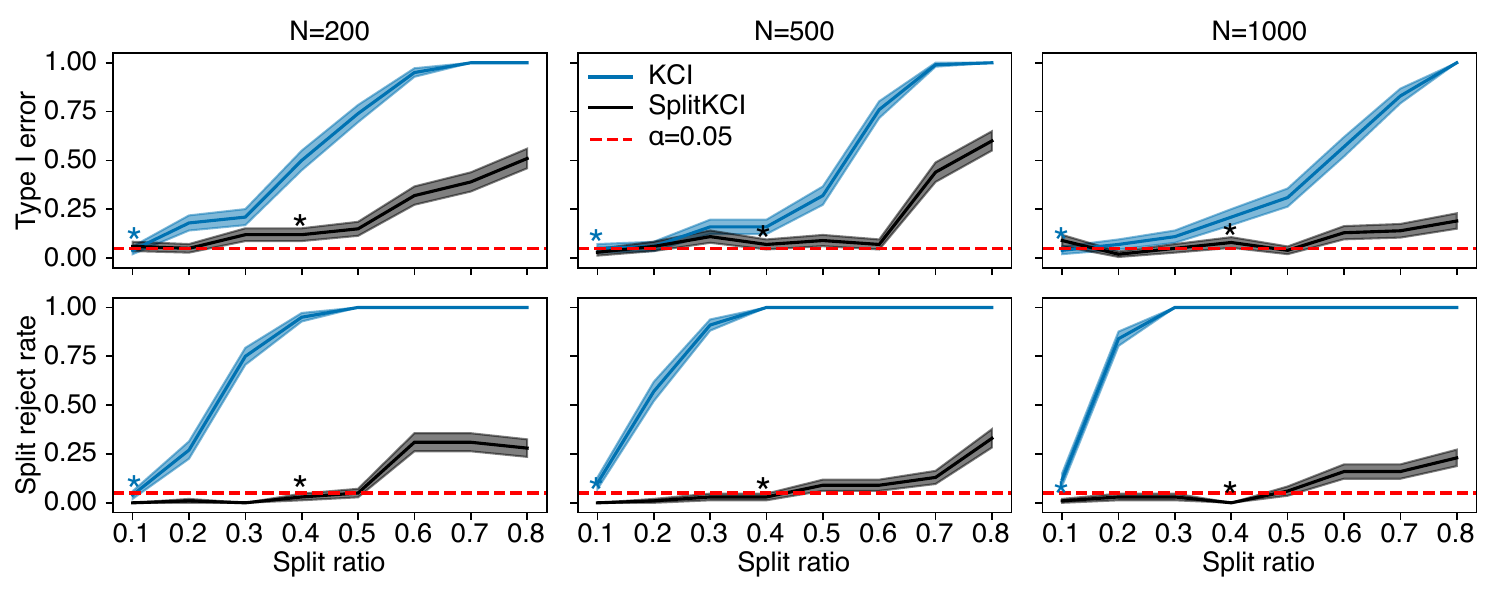}
     \caption{Train/test splitting in the synthetic neural data. Type I error (top) vs. rejection rate for the train/test split heuristic (bottom) for different dataset sizes $N=n+m$ and test to train ($n/m$) split ratios. Lines/shaded area: mean/$\pm$SE over 100 trials, $\alpha=0.05$. Asterisks: approximate split ratios from \cref{alg:train_test_split} to show performance of the splitting heuristic.}
     \label{fig:budget_h0}
\end{figure}

\begin{figure}[ht]
     \centering
     \includegraphics[width=0.95\textwidth]{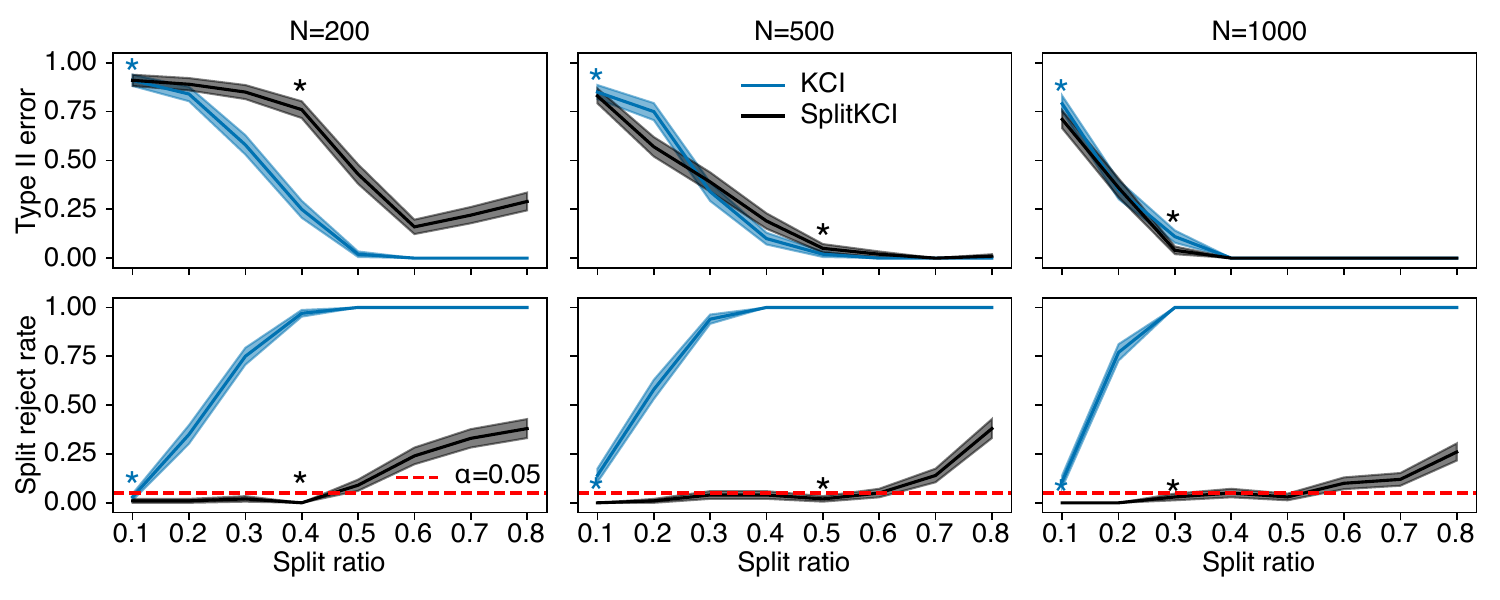}
     \caption{Train/test splitting in the synthetic neural data. Type II error (top) vs. rejection rate for the train/test split heuristic (bottom) for different dataset sizes $N=n+m$ and test to train ($n/m$) split ratios. Lines/shaded area: mean/$\pm$SE over 100 trials, $\alpha=0.05$. Asterisks: approximate split ratios from \cref{alg:train_test_split} to show performance of the splitting heuristic.}
     \label{fig:budget_h1}
\end{figure}

\subsection{SplitKCI holds level and achieves high power for synthetic neural data}\label{subseq:levelpower}

Here, we evaluate performance over varying number of data points $N=n+m$. For SplitKCI, we find the train/test split ratio using \cref{alg:train_test_split} for one random seed, and then evaluate the rest of the random seeds on the chosen split ratio. For KCI and RBPT2', we show results for $n=100$ test points. For GCM, we don't use splitting, as was recommended by the original paper \citep{shah2020hardness}. Other fixed splitting strategies for all methods perform as well or worse (see \cref{app:fig:rat_std_100_05,app:fig:rat_std_no_split_control}).

\paragraph{Type I error} Under the null, SplitKCI achieved level control, as well as GCM and (for larger $N$) RBPT2' (\cref{fig:rat_standard}A). KCI struggled much more, especially at smaller dataset sizes. This illustrates that the automated splitting procedure for SplitKCI allows the test to hold level without pre-determined train/test splits (as well as for fixed splits, see \cref{app:fig:rat_std_100_05}). 

\paragraph{Type II error} Under the alternative, all methods apart from KCI quickly achieved low error (\cref{fig:rat_standard}B). As we used a fixed size of the test dataset for KCI, increasing the total budget $N$ led to increasingly better CME estimates. Therefore, for larger $N$, KCI estimate becomes less biased, which in turns leads to less frequent rejection. If we increase the number of test points, the power would increase as well (as shown in \cref{fig:budget_h1}). However, KCI in this case would fail to hold level. In contrast, our train/test approach for SplitKCI allows to find splits that achieve high power while still holding level, even though the optimal split size varies across $N$ (see \cref{app:fig:rat_std_100_05} for $n=100$ and $n=m$ performance).  

Using the train/test splitting for KCI and SplitKCI was crucial for Type I performance, as hinted earlier in \cref{seq:regression_ci}. While GCM and many other tests often don't use a train/test split (with GCM achieving the best results in that setting in all of our experiments), KCI and SplitKCI without a train/test split do not achieve level $\alpha$ in this task (see \cref{app:subsubseq:control}).

\begin{figure}[ht]
     \centering
     \includegraphics[width=0.95\textwidth]{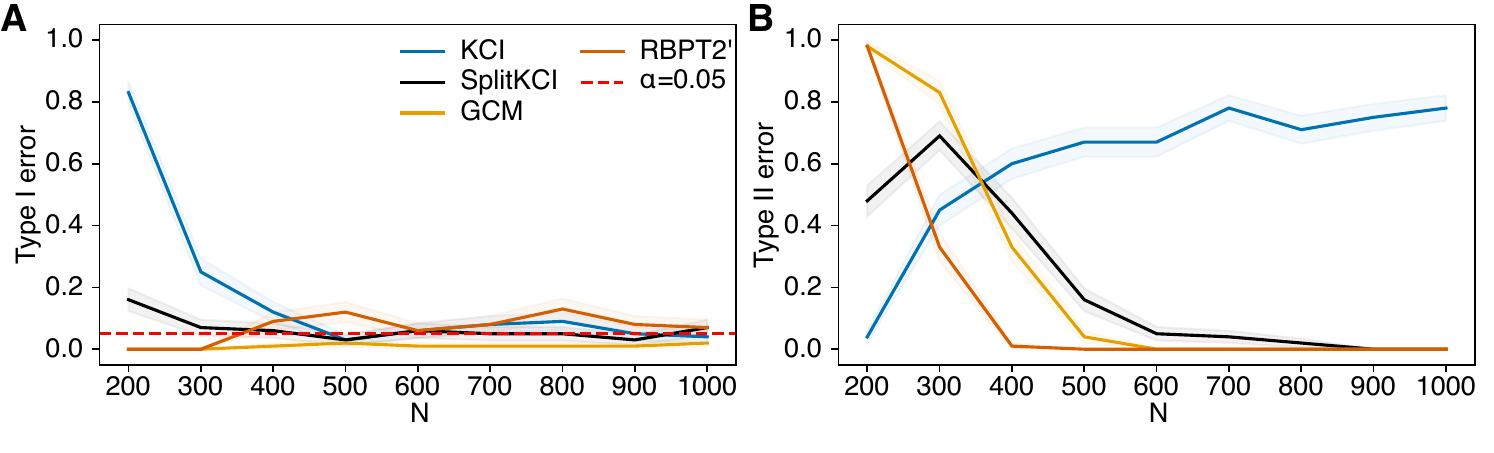}
     \caption{Synthetic neural data experiments for an increasing dataset size $N=n+m$ (for $n$ test and $m$ training points). \textbf{A.} Type I error. \textbf{B.} Type II error. Lines/shaded area: mean/$\pm$SE over 100 trials, $\alpha=0.05$.}
     \label{fig:rat_standard}
\end{figure}

\subsection{SplitKCI holds level and achieve high power with auxiliary data for synthetic neural data}\label{subseq:auxdata}
In some application, such as genetic studies \citep{candes2018panning}, full triplets of $(\A,\B,\C)$ points could be scarcely available, while $(\B,\C)$ auxiliary is not (e.g.\ because it is easier to collect). For the synthetic neural data, one could imagine that simultaneous recordings from two neural populations $\A$ and $\B$ are expensive, while collecting just one population $\B$ along with behavioural data $\C$ is cheaper.
As CME estimation bias for both KCI and SplitKCI depend on the size of the $(\B,\C)$ multiplicatively (see \cref{th:kci-bias}), we expect all tests to perform better in this regime.

We studied the setting in which the number of $(\A,\B,\C)$ points is fixed and small (we use $N=200$ and $N=400$ points), but we have access to an independent large set of $M=(\B,\C)$ points. We therefore use $N$ for the $\C\rightarrow\A$ regressions and testing, and $M$ points for the $\C\rightarrow\B$ regressions. We don't use a train/test split over $N$ for KCI, SplitKCI, and GCM, and use $n=100$ for RBPT'. 

\paragraph{Type I error} Under the null, all methods apart from RBPT2' held level without train/test splitting over $(\A,\B,\C)$ (\cref{fig:rat_aux}A,C; for RBPT2', a better Type I error was achieved for 100/100 train/test split). For SplitKCI, splitting did not affect Type I error much, but KCI was at level only without train/test splitting (see \cref{app:fig:rat_aux_200,app:fig:rat_aux_400}). A potential explanation is that due to small $m$, the $\C\rightarrow\A$ tends to overfit, producing small in-sample and large out-of-sample errors. (The effect on SplitKCI is smaller since the train data is split, and so half of the residuals for each regression are always out-of-sample.) 

\paragraph{Type II error} Under the alternative, for $N=200$ all methods generally showed poor Type II error (\cref{fig:rat_aux}B), with GCM having no power. For RBPT2', its better performance is consistent with poor Type I error control. For $N=400$ (\cref{fig:rat_aux}D), Type II error decreases for all methods, staying the lowest for RBPT' and SplitKCI, and the highest for GCM and KCI.

\begin{figure}[ht]
     \centering
     \smallskip
     \includegraphics[width=0.95\textwidth]{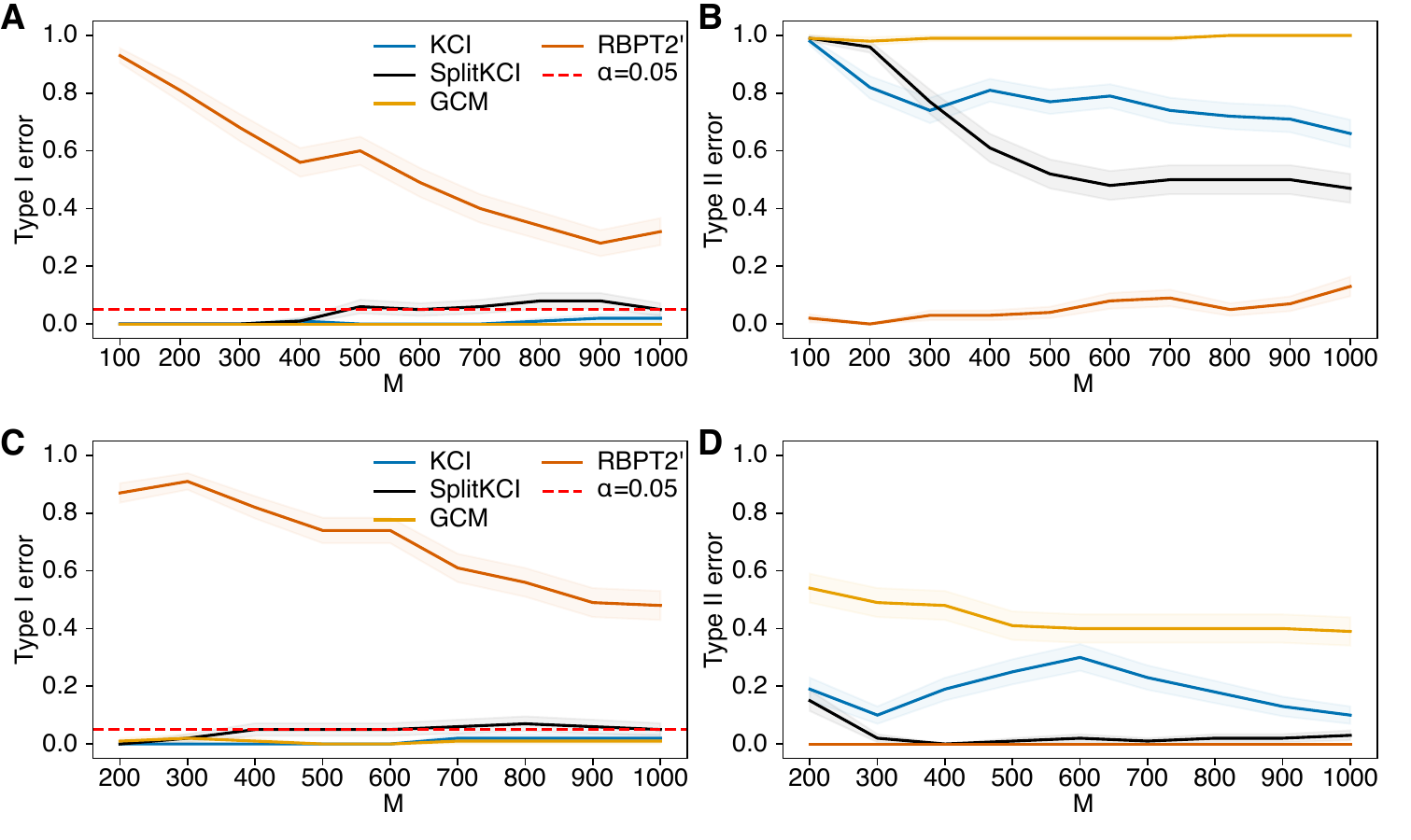}
     \caption{Synthetic neural data experiments for a fixed $(A,B,C)$ dataset size ($N$ points), but increasing amount of auxiliary $(B,C)$ data ($M$ points). \textbf{A.} Type I error for 200 $(A,B,C)$ points. \textbf{B.} Type II error. \textbf{C-D.} Same as \textbf{A-B}, but for 400 $(A,B,C)$ points. Lines/shaded area: mean/$\pm$SE over 100 trials, $\alpha=0.05$.}
     \label{fig:rat_aux}
\end{figure}

\begin{figure}[ht]
    \centering
    \includegraphics[width=0.95\textwidth]{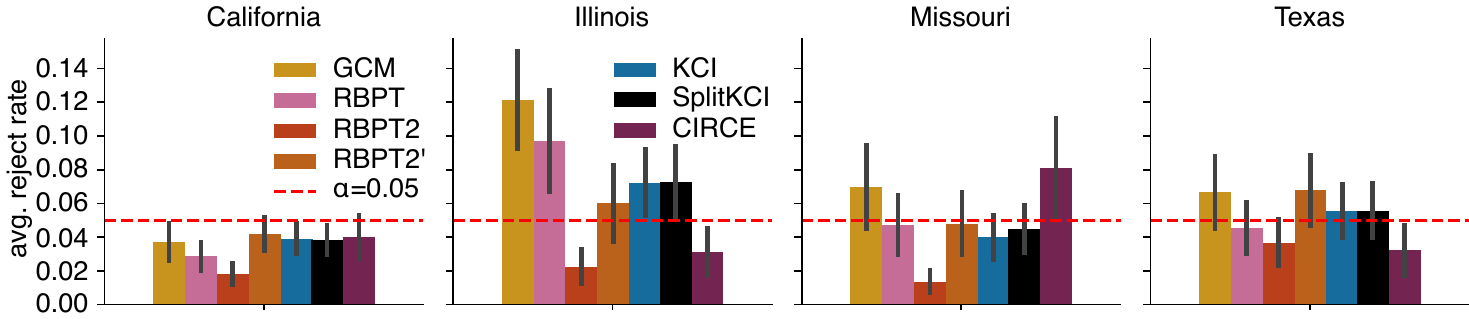}
    \caption{Simulated null hypothesis for the car insurance data. For each state, the average rejection rate across several companies is calculated as in \cite{polo2023conditional}. Mean $\pm$SE over 50 trials.}
    \label{fig:real_data}
\end{figure}

\phantom{dunno why the spacing gets all screwed up but this helps a bit}

\subsection{Car insurance data}\label{subseq:real_data}

We follow the data assessment experiment from \cite{polo2023conditional} to evaluate different methods under a \textit{simulated} $\hzero$. 
\citet{polo2023conditional} divided the driver's risk into 20 clusters, and shuffled the corresponding insurance price (per company and per state; about 1-2k points for each, which is enough to avoid merging train and test data). The resulting data was evaluated for several tests, with average (over companies) rejection rate reported as Type I error. We find (see \cref{fig:real_data}) that both KCI and SplitKCI produce robust results at the desired $\alpha\!=\!0.05$ level. As well, RBPT2 with or without bias correction holds level. In contrast, GCM, RBPT and CIRCE in some cases produce larger Type I errors.

Finally, evaluated all methods on the full dataset to test for conditional independence. Using a 70/30\% train/test split as in \cite{polo2023conditional} (with 24-34k points total per state), we computed $p$-values for each method. As in \cite{polo2023conditional}, RBPT and RBPT2 did not reject the null at $\alpha\!=\!0.05$ for California data, but did for the other states. All other methods, including the bias-corrected RBPT2', rejected the null for all states.
The results are summarized in \cref{fig:real_data_pval}, suggesting that most methods, apart from RBPT and RBPT2, reject the null at $\alpha=0.05$ for all states. CIRCE (not shown in the main text) performed similar to KCI/SplitKCI but produced less accurate Type 1 errors.

\begin{figure}[ht!]
    \centering
    \includegraphics[width=0.95\linewidth]{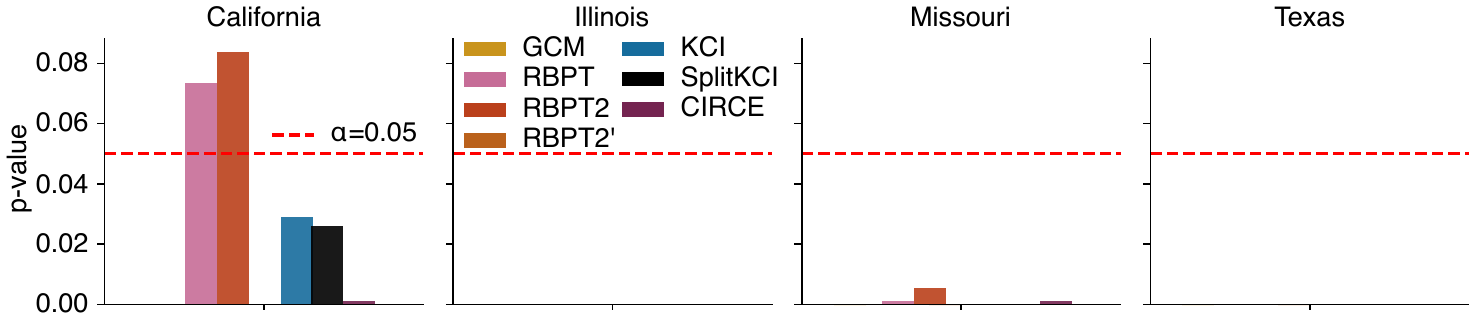}
    \caption{Evaluation of all methods on the full car insurance dataset (without averaging over companies, unlike in the simulated null in \cref{fig:real_data}). Expanded from \cite{polo2023conditional} by adding KCI-based methods.}
    \label{fig:real_data_pval}
\end{figure}

\section{Discussion}
\label{sec:discussion}

We introduced SplitKCI, a version of the KCI \citep{zhang2012kernel} test that is less biased towards rejection and can be combined with a train/test data splitting heuristic to achieve Type I error control under the null and also Type II error minimisation under the alternative. We also showed that the wild bootstrap approach for producing $p$-values for kernel-based independence tests can be generalised to KCI and SplitKCI with imperfect CME estimates. In experiments, we showed that SplitKCI can indeed hold level and produce lower Type II errors that KCI, and also than non-kernel based methods. 

Several extensions of our work are possible. Other kernel-based measures of conditional dependence \citep{fukumizu2007kernel,strobl2019approximate,park2020measure,scetbon2022asymptotic} also rely on CME and are limited by CME estimator's quality; our insights might be used to improve alternative methods. 

As CME estimation accuracy is crucial for test performance, parametric alternatives to kernel ridge regression might provide better estimates in some cases. For instance, when $\A$ is categorical, we can replace kernel ridge regression with logistic regression, or any classifier, with a linear kernel on top; this can produce accurate predictions of $\A$ that are not limited by simple hand-crafted kernels. It might also be possible to estimate $\muac$ with parametric methods e.g.\ as $\hatmuac(\c)=f_\theta(\c)\Phi_{\a}$ -- cf.\ \eqref{eq:cme} -- similar to regression-based methods like GCM \citep{shah2020hardness}.  

Next, our wild bootstrap result shows pointwise asymptotic level control. While it might be possible to show uniformly asymptotic level control for distributions satisfying our assumptions, we believe this is more challenging for kernel-based measures than for settings with asymptotically normal statistics (e.g.\ GCM). One approach could be to find an asymptotically normal (Split)KCI estimator, as in the cross-HSIC statistic for unconditional independence testing \citep{shekhar2023permutation}, but it is unclear how this drastically different approach would perform in practice.

Using a kernel-based wild bootstrap can lead to non-asymptotic Type I error guarantees in certain testing scenarios, such as MMD two-sample tests \citep{schrab2021mmd,schrab2022efficient,fromont2012kernels} and HSIC independence tests \citep{albert2022adaptive}.
In other cases, the control is only asymptotic, as in KSD goodness-of-fit \citep{schrab2022ksd,key2021composite}.
It might be possible to develop non-asymptotic results for KCI-style tests as well.
\citet{shah2020hardness} proved that if a conditional independence test always controls the Type I error non-asymptotically, then it cannot have power against \emph{any} alternative.
Throughout this paper, however, we assume that we can reliably estimate the conditional means -- that is, the underlying operators are smooth enough. This restricts the space of nulls and alternatives for which kernel-based methods work, suggesting that it might be possible to develop results similar to non-asymptotically valid tests that can achieve minimax-optimal power \citep{schrab2021mmd,biggs2023mmdfuse,kim2023differentially}.

Conditional dependence tests are often used for sensitive domains in which the conclusions of a study can have significant societal impacts
\citep[e.g.][]{angwin2017minority}.
A further important topic for future work is therefore test interpretability,
since conditional (in)dependence may be caused by unknown confounding variables.

\acks{This work was supported by
the Natural Sciences and Engineering Research Council of Canada
(NSERC Discovery Grant: RGPIN-2020-05105; Discovery Accelerator Supplement: RGPAS-2020-00031; Arthur B. McDonald Fellowship: 566355-2022),
the Canada CIFAR AI Chairs program,
a CIFAR Learning in Machine and Brains Fellowship,
U.K. Research and Innovation (grant EP/S021566/1),
and the Gatsby Charitable Foundation.
This research was enabled in part by support provided by Calcul Québec %
and the Digital Research Alliance of Canada. %
The authors acknowledge the material support of NVIDIA in the form of computational resources.

The authors would like to thank Namrata Deka and Liyuan Xu for helpful discussions.}

\newpage

\toggletrue{inappendix}
\appendix

We present the proofs of the technical results in \cref{app:proofs}.
Details about other measures of conditional independence are presented in \cref{app:sec:other_methods}.
\cref{app:experiments} contains experimental details, as well as additional experiments.

\section{Proofs}
\label{app:proofs}

\subsection{SplitKCI definition} \label{app:proofs:splitkci}
\splitkci*
\begin{proof}
    Under $\hzero$, assuming w.l.o.g. that $\beta^{(1)}\rightarrow\muac$ in RKHS norm and using the fact that other CME estimates are bounded,
        \begin{align*}
            T(\beta^{(1)},\tau^{(1)},\beta^{(2)},\tau^{(2)}) \rightarrow T(\muac,\tau^{(1)},\beta^{(2)},\tau^{(2)}) \\
            = \dotprod{\mathfrak C(\muac,\tau^{(1)})}{\mathfrak C(\beta^{(2)},\tau^{(2)})}\,.
        \end{align*}
    Using linearity of expectation, we have that
    \begin{align*}
        \mathfrak C(\muac,\tau^{(1)}) \,&= \expect\left((\phi_{\a}(\A)-\muac(\C))\otimes \phi_{\c}(\C)\otimes (\phi_{\b}(\B) - \tau^{(1)}(\C))\right)\\
        &=\expect\left((\phi_{\a}(\A)-\muac(\C))\otimes \phi_{\c}(\C)\otimes \phi_{\b}(\B)\right)=\mathfrak C(\muac, 0)=_{\hzero} 0\,,
    \end{align*}
    and therefore $T =_{\hzero} 0$.

    Under $\hone$, again assuming w.l.o.g. that $\tau^{(1)},\tau^{(2)}\rightarrow\mubc$ in RKHS norm and using the fact that other CME estimates are bounded,
   \begin{align*}
            T(\beta^{(1)},\tau^{(1)},\beta^{(2)},\tau^{(2)}) \,&\rightarrow T(\beta^{(1)},\mubc,\beta^{(2)},\mubc) \\
            &= \dotprod{\mathfrak C(\beta^{(1)}, \mubc)}{\mathfrak C(\beta^{(2)},\mubc)}\\
            &=\dotprod{\mathfrak C(0, \mubc)}{\mathfrak C(0, \mubc)} = \|\mathfrak C_{\mathrm{CIRCE}}\|_{\mathrm{HS}}^2 >_{\hone} 0\,.
    \end{align*}
\end{proof}

\subsection{Bias} \label{app:proofs:bias}
\kcibias*
\begin{proof}
First, we will consider the most generic case of SplitKCI, later assuming that $\hatmuac^{(1)}=\hatmuac^{(2)}$ and $\hatmubc^{(1)}=\hatmubc^{(2)}$ for KCI. Denoting 
\begin{align*}
    (\widehat K^c_{\A})_{ij} \,&= \frac{1}{2}(\widehat K^{12}_{\A})_{ij} + \frac{1}{2}(\widehat K^{21}_{\A})_{ij}\,,\quad (\widehat K^{12}_{\A})_{ij}=\dotprod{\phi_{\a}(\a_i)-\hatmuac^{(1)}(\c_i)}{\phi_{\a}(\a_j)-\hatmuac^{(2)}(\c_j)}\,,
\end{align*}
and similarly $(\widehat K^c_{\B})_{ij}$ computed over for \textit{test} points $i,j$. Additionally decomposing all CME estimates as
\begin{align*}
    (\widehat K_{\B\C\given \C})_{ij} \,&= (K_{\C}\odot \widehat K^c_{\B})_{ij} = k_{\c}(\c_i, \c_j)\,\dotprod{\phi_{\b}(\b_i)-\hatmubc(\c_i)}{\phi_{\b}(\b_j)-\hatmubc(\c_j)}\,,
\end{align*}
where we decompose the empirical estimate of the mean into the true one, the distance between the true one and the expected estimate given the ridge parameters, and finally between the ridge mean and its estimate (so that the last term is zero-mean).

Under the null, we can immediately remove the terms with the true centered variables $\phi_{\a}(\a_i)-\muac^{(1)}(\c_i)$:
\begin{align*}
    &\expect\, (\widehat K^{12}_{\A})_{ij}\,(K_{\C})_{ij}\,(\widehat K_{\B}^{12})_{ij} \\
    \,&=\expect\, \dotprod{\phi_{\a}(\a_i)-\hatmuac^{(1)}(\c_i)}{\phi_{\a}(\a_j)-\hatmuac^{(2)}(\c_j)} (K_{\C})_{ij}\,(\widehat K_{\B}^{12})_{ij}\\
    &=_{\hzero}\expect\, \dotprod{\delta\muac^{\lambda}(\c_i) + \delta\hatmuac^{(1)\lambda}(\c_i)}{\delta\muac^{\lambda}(\c_j) + \delta\hatmuac^{(2)\lambda}(\c_j)} (K_{\C})_{ij}\,(\widehat K_{\B}^{12})_{ij}\\
    &=\expect\, \sqbrackets{\dotprod{\delta\muac^{\lambda}(\c_i)}{\delta\muac^{\lambda}(\c_j)} + \dotprod{\delta\hatmuac^{(1)\lambda}(\c_i)}{\delta\hatmuac^{(2)\lambda}(\c_j)}} (K_{\C})_{ij}\,(\widehat K_{\B}^{12})_{ij}\,.\\
    &=_{\hzero} \expect\, \sqbrackets{\dotprod{\delta\muac^{\lambda}(\c_i)}{\delta\muac^{\lambda}(\c_j)} + \dotprod{\delta\hatmuac^{(1)\lambda}(\c_i)}{\delta\hatmuac^{(2)\lambda}(\c_j)}} k_{\c}(\c_i, \c_j)\dotprod{\delta\hatmubc(\c_i)}{\delta\hatmubc(\c_j)}\,,
\end{align*}
where in the third line we took the expectation over $\datatrain{\frac{m_{\a\c}}{2}(1)},\datatrain{\frac{m_{\a\c}}{2}(2)}$ using the facts that $\expect\,\delta\hatmuac^{1/2\lambda}=0$ and that all data splits (apart from $\datatrain{\frac{m_{\a\c}}{2}(1)},\datatrain{\frac{m_{\a\c}}{2}(2)}$) are independent. Repeating the same calculation for $(\widehat K_{\B}^{12})_{ij}$,
\begin{align*}
    &\expect\, (\widehat K^{12}_{\A})_{ij}\,(K_{\C})_{ij}\,(\widehat K_{\B}^{12})_{ij} \\
    \,&=\expect\, \left(\sqbrackets{\dotprod{\delta\muac^{\lambda}(\c_i)}{\delta\muac^{\lambda}(\c_j)} + \dotprod{\delta\hatmuac^{(1)\lambda}(\c_i)}{\delta\hatmuac^{(2)\lambda}(\c_j)}} k_{\c}(\c_i, \c_j)\right.\\
    &\qquad\qquad\qquad\left.\sqbrackets{\dotprod{\delta\mubc^{\lambda}(\c_i)}{\delta\mubc^{\lambda}(\c_j)} + \dotprod{\delta\hatmubc^{(1)\lambda}(\c_i)}{\delta\hatmubc^{(2)\lambda}(\c_j)}} \right)\,.
\end{align*}

For SplitKCI we can take the expectation w.r.t. the train data, removing the zero-mean $\dotprod{\delta\hatmuac^{(1)\lambda}(\c_i)}{\delta\hatmuac^{(2)\lambda}(\c_j)}$ term. Therefore, if we take expectations over the test points $i,j$ under $\hzero$,
\begin{align*}
    b_{\mathrm{KCI}}\,&= \expect\, \left(\sqbrackets{\dotprod{\delta\muac^{\lambda}(\c_i)}{\delta\muac^{\lambda}(\c_j)} + \dotprod{\delta\hatmuac^{\lambda}(\c_i)}{\delta\hatmuac^{\lambda}(\c_j)}} k_{\c}(\c_i, \c_j)\right.\\
     &\qquad\quad\left.\sqbrackets{\dotprod{\delta\mubc^{\lambda}(\c_i)}{\delta\mubc^{\lambda}(\c_j)} + \dotprod{\delta\hatmubc^{\lambda}(\c_i)}{\delta\hatmubc^{\lambda}(\c_j)}}\right)\,,\\
    b_{\mathrm{SplitKCI}}\,&=\expect\, \left(\sqbrackets{\dotprod{\delta\muac^{\lambda}(\c_i)}{\delta\muac^{\lambda}(\c_j)}} k_{\c}(\c_i, \c_j)\right.\\
     &\qquad\quad\left.\sqbrackets{\dotprod{\delta\mubc^{\lambda}(\c_i)}{\delta\mubc^{\lambda}(\c_j)}}\right)\,.\\
\end{align*}

Now, for datasets of equal size, the KCI bias is clearly larger than both SplitKCI ones. However, in practice we would use twice as many points for KCI, so the exact scaling of the terms is important. 

To bound the individual terms, we first note that since we assumed the CME is well-specified, we can use the reproducing property $\dotprod{\muac(\c)}{h}_{\rkhs{\a}}=\dotprod{\muac}{\phi_{\c}(\c)\otimes h}_{\mathrm{HS}}$ to show that $\|\muac(\c)\|\leq \|\muac\| k_{\c}(\c,\c)^{1/2}$, and hence by Cauchy-Schwarz 
\begin{align*}
    \dotprod{\delta\muac^{\lambda}(\c_i)}{\delta\muac^{\lambda}(\c_j)} \leq \|\delta\muac^\lambda\|^2\,k_{\c}(\c_i, \c_i)^{1/2}k_{\c}(\c_j, \c_j)^{1/2}\leq\|\delta\muac^\lambda\|^2\,,
\end{align*}
assuming the kernels are bounded by 1. We can extend this to all products, again using independence of data splits (so  $\|\delta\hatmuac^\lambda\|^2$ and $\|\delta\hatmubc\|^2$ are independent) and the fact that $\|\delta\muac^\lambda\|^2$ and $\|\delta\mubc^\lambda\|^2$ are deterministic,
\begin{align*}
    b_{\mathrm{KCI}}\,&\leq_{\hzero} \sqbrackets{\|\delta\muac^\lambda\|^2 + \expect\,\|\delta\hatmuac^\lambda\|^2}\,\sqbrackets{\|\delta\mubc^\lambda\|^2 + \expect\,\|\delta\hatmubc^\lambda\|^2}\,,\\
    b_{\mathrm{SplitKCI}}\,&\leq_{\hzero} \sqbrackets{\|\delta\muac^\lambda\|^2}\,\sqbrackets{\|\delta\mubc^\lambda\|^2}\,.\\
\end{align*}

Now, with our assumptions about CME convergence, we can directly use Theorem 2 of \cite{li2023optimal}, (with $\gamma=1$, $\alpha=1$ and $\beta\in(1,2]$ to match the required HS norm and the well-specified case): $\|\delta\hatmubc\|^2\leq \tau^2\,K_{\b\c} (m_{\b\c})^{-\frac{\beta-1}{\beta + p}}$ for large enough $m_{\b\c}$ and a positive constant $K_{\b\c}$ with probability at least $1-4e^{-\tau}$ for $\tau\geq 1$. Hence, we can write down the expectation as an integral and split it at $t^*=K_{\b\c} (m_{\b\c})^{-\frac{\beta-1}{\beta + p}}$ corresponding to $\tau=1$:
\begin{align*}
    \expect\,\|\delta\hatmubc\|^2\,&=\int_0^{\infty}\PP{\|\delta\hatmubc\|^2\geq t}dt\\
    &=\int_0^{t^*}\PP{\|\delta\hatmubc\|^2\geq t}dt + \int_{t^*}^{\infty}\PP{\|\delta\hatmubc\|^2\geq t}dt \\
    &\leq \int_0^{t^*}1dt + \int_0^{\infty} 4\,\exp\brackets{-\sqrt{t\,\frac{1}{K_{\b\c}}m_{\b\c}^{\frac{\beta-1}{\beta + p}}}}dt = 9\,K_{\b\c}m_{\b\c}^{-\frac{\beta-1}{\beta + p}}\,.
\end{align*}

By the exact same argument, and using the bias-variance decomposition of the bound (Eq. 12 of \cite{li2023optimal} and then the proof in Appendix A.3),
\begin{align}
    \expect\,\|\delta\hatmuac^\lambda\|^2 \leq \|\delta\muac^\lambda\|^2 + 9\,K_{\a\c}m_{\a\c}^{-\frac{\beta-1}{\beta + p}}.
    \label{app:eq:kci_bias_term}
\end{align}

Finally, by Lemma 1 of A.1 \cite{li2023optimal}, we have a (deterministic) bound (note a different HS norm defined by the $\beta$ interpolation space; the standard norm corresponds to $\beta=1$; see more details in \cite{li2023optimal}):
\begin{equation*}
    \|\delta\muac^\lambda\|^2 \leq \|\muac\|^2_{\beta}\lambda^{\beta-1} = \|\muac\|^2_{\beta}\,m_{\a\c}^{-\frac{\beta-1}{\beta+p}}
\end{equation*}
where we used the optimal learning rate $\lambda=m_{\a\c}^{-\frac{1}{\beta+p}}$. 

It remains to compare the bias in KCI for the full train set and SplitKCI for halved sets. The constant $K_{\a\c}$ in \cref{app:eq:kci_bias_term} is a sum of positive terms, with one of them being $576\,A^2\,\|\muac\|^2_{\beta}$,
where for the well-specified case of $\alpha=1$ and $A^2$ can be defined as the supremum norm of $k_y$ (see \cite{fischer2020sobolev}, Theorem 9). Since we assumed that $k_y$ is bounded by one, the KCI bound (\cref{app:eq:kci_bias_term}) becomes
\begin{gather*}
    \expect\,\|\delta\hatmuac^\lambda\|^2 \leq \|\muac\|^2_{\beta}\,m_{\a\c}^{-\frac{\beta-1}{\beta+p}} + 9\cdot 576\, \|\muac\|^2_{\beta}\,m_{\a\c}^{-\frac{\beta-1}{\beta+p}} + K_1m_{\a\c}^{-\frac{\beta-1}{\beta + p}}\\=(1+K_1)\, \|\muac\|^2_{\beta}\,m_{\a\c}^{-\frac{\beta-1}{\beta+p}} + K_2m_{\a\c}^{-\frac{\beta-1}{\beta + p}}\,,
\end{gather*}
for $K_1 = 5184$ and $K_2 > 0$.

For SplitKCI with half the data, we get
\begin{align*}
    \|\delta\muac^\lambda\|^2 \leq \|\muac\|^2_{\beta}\,\brackets{\frac{m_{\a\c}}{2}}^{-\frac{\beta-1}{\beta+p}} \leq 2\|\muac\|^2_{\beta}\,m_{\a\c}^{-\frac{\beta-1}{\beta+p}}\,,\\
    \|\delta\mubc^\lambda\|^2 \leq \|\mubc\|^2_{\beta}\,\brackets{\frac{m_{\b\c}}{2}}^{-\frac{\beta-1}{\beta+p}} \leq 2\|\mubc\|^2_{\beta}\,m_{\b\c}^{-\frac{\beta-1}{\beta+p}}\,.
\end{align*}

Combining the bounds and denoting $K_3=K_{\b\c}$  for simplicity finishes the proof.
\end{proof}

\subsection{Wild bootstrap} \label{app:proofs:wild-bootstrap}
\wild*

Before beginning the proof, we note that in practice we use the unbiased estimator in \cref{app:eq:hsic_ub}, which corresponds to a U-statistic.
The only difference between the U- and V-statistics of Equations \ref{eq:kci_estimator} and \ref{app:eq:hsic_ub} is essentially removal of the terms $i=j$ in the sum (the scaling is also adapted accordingly). See \citet[Lemma 22]{kim2023differentially} for an expression of the difference between U- and V-statistics for the closely-related HSIC case.

\begin{proof}
We start by setting up the notations. 
Let
\begin{gather*}
    (K^c_{\A})_{ij}\!=\!\dotprod{\phi_{\a}(\A_i)\!-\!\muac(\C_i)}{\phi_{\a}(\A_j)\!-\!\muac(\C_j)},\\(\widehat{K}^c_{\A})_{ij}\!=\!\dotprod{\phi_{\a}(\A_i)\!-\!\hatmuac(\C_i)}{\phi_{\a}(\A_j)\!-\!\hatmuac(\C_j)},
\end{gather*}
and similarly for $\B$ instead of $\A$. 
The $V$-statistics and wild bootstrap KCI estimates using the true and estimated conditional mean embeddings are given as 
\begin{align*}
    &V = \frac{1}{n^2}\sum_{1\leq i,j \leq n}(K^c_{\A})_{ij} (K_{\C})_{ij} (K^c_{\B})_{ij}, & &V^* = \frac{1}{n^2}\sum_{1\leq i,j \leq n}  q_i q_j\, (K^c_{\A})_{ij}  (K_{\C})_{ij} (K^c_{\B})_{ij}, \\
    &\widehat V = \frac{1}{n^2}\sum_{1\leq i,j \leq n}(\widehat K^c_{\A})_{ij} (K_{\C})_{ij} (\widehat K^c_{\B})_{ij}, & &\widehat V^* = \frac{1}{n^2}\sum_{1\leq i,j \leq n}q_i q_j\, (\widehat K^c_{\A})_{ij}  (K_{\C})_{ij} (\widehat K^c_{\B})_{ij}.
\end{align*}
We focus on the KCI case, as the proof for CIRCE follows very similarly (the only difference is that the centered kernel matrix $K^c_{\A}$ is replaced with its non-centered version $K_{\A}$).
The reasoning also applies to SplitKCI.
We emphasize that the conditional mean embeddings  $\hatmuac$, $\hatmubc$ do not admit close forms and have to be estimated using the CME estimator (see Equation \ref{eq:cme}) on \emph{held-out} data.

We prove below that 
\begin{enumerate}
\item[(i)] under the null  $n(\widehat V - V)\to 0$ and $n(\widehat V^* -  V^*)\to 0$ in probability,
\item[(ii)] under the alternative $(\widehat V - V)\to 0$ and $(\widehat V^* -  V^*)\to 0$ in probability.
\end{enumerate}
Before proving (i) and (ii), we first show that they imply the desired results. 
\citet{chwialkowski2014wild} study the wild bootstrap generally and handle the MMD and HSIC as special cases. 
Similarly to their HSIC setting, a symmetrisation trick can be performed for KCI and CIRCE.
We rely on their general results which assume a weaker $\tau$-mixing assumption which is trivially satisfied the i.i.d. setting we consider, we point that similar results can be found in \citet{wu1986jackknife,shao2010dependent,leucht2013dependent}.

Under the null, we have
$$
n(\widehat V - \widehat V^*) = n(\widehat V - V) + n(V - V^*) + n(V^* - \widehat V^*)
$$
where by (i) $n(\widehat V - V)\to 0$ and $n(V^* - \widehat V^*)\to 0$ in probability. 
\citet[Theorem 1]{chwialkowski2014wild} also guarantees that $n(V - V^*)\to 0$ weakly. 
We conclude that $n(\widehat V - \widehat V^*)\to0$ weakly, that is, $n\widehat V^*$ and $n\widehat V$ converge weakly to the same distribution under the null.

Under the alternative, by (ii) we have $(\widehat V - V)\to 0$ and $(V^* - \widehat V^*)\to 0$ in probability.
Moreover, \citet[Theorems 3 and 2]{chwialkowski2014wild} guarantees that $V^*\to 0$ in mean squared and that $V\to c$ in mean squared for some $c>0$, respectively. We conclude that
\begin{equation*}
\widehat V^* = (\widehat V^* - V^*) + V^* \to 0  \qquad \textrm{ and }\qquad
\widehat V = (\widehat V - V) + V \to c
\end{equation*}
in probability. 
\end{proof}

\begin{proof}[Proof of (i) and (ii)]
This proof adapts the reasoning of \citet[Lemma 3]{chwialkowski2014wild} for HSIC to hold for KCI. This is particularly challenging because while the mean embedding admits a closed form, the conditional mean embedding does not and has weaker convergence properties \citep{li2023optimal,li2023towards}. We provide results in probability rather than in expectation (one can use Markov's inequality to turn an expected bound into one holding with high probability).
Let us define\footnote{Note that the scaling is different from \citet[Equation 7]{chwialkowski2014wild}.}
\begin{align*}
    T_n &= \frac{1}{n} \sum_{i=1}^n Q_i (\phi_{\a}(\A_i) - \muac(\C_i)) \otimes \phi_{\c}(\C_i) \otimes (\phi_{\b}(\B_i) - \mubc(\C_i)), \\
    \widehat T_n &= \frac{1}{n} \sum_{i=1}^n Q_i (\phi_{\a}(\A_i) - \hatmuac(\C_i)) \otimes \phi_{\c}(\C_i) \otimes (\phi_{\b}(\B_i) - \hatmubc(\C_i)),
\end{align*}
where, either $Q_i = q_i$, $i=1,\dots,n$ Rademacher variable in which case 
$$
\|T_n\|^2 = \frac{1}{n^2}\sum_{1\leq i,j \leq n}  q_i q_j\, (K^c_{\A})_{ij}  (K_{\C})_{ij} (K^c_{\B})_{ij} = V^* 
 \qquad \textrm{ and }\qquad
\|\widehat T_n\|^2 = \widehat V^*,
$$
or $Q_i = 1$, $i=1,\dots,n$ in which case $\|T_n\|^2 = V$ and $\|\widehat T_n\|^2 = \widehat V$. So it suffices to prove
\begin{enumerate}
\item[(i)] under the null  $n(\|\widehat T_n\|^2 - \|T_n\|^2)\to 0$ in probability,
\item[(ii)] under the alternative $(\|\widehat T_n\|^2 - \|T_n\|^2)\to 0$ in probability.
\end{enumerate}

We can obtain these results provided that 
\begin{enumerate}
\item[(iii)] $\|\sqrt{n} (\widehat T_n - T_n)\| \to 0$ in probability,
\end{enumerate}
which we prove below. For now, assume (iii) holds.

Under the null, we then have
\begin{align*}
n \,\Big|\|\widehat T_n\|^2-\| T_n\|^2\Big| 
&\leq n\, \Big|\|\widehat T_n\|-\| T_n\|\Big| \, \Big|\|\widehat T_n\|+\| T_n\|\Big| \\
&\leq \sqrt{n}\,\|(\widehat T_n - T_n)\|  \Big(\sqrt{n}\,\|\widehat T_n\|+\sqrt{n}\,\| T_n\|\Big) \\
&\to 0
\end{align*}
in probability, 
provided that $\sqrt{n}\| T_n\|<\infty$, which is guaranteed by \citet[Lemma 4]{chwialkowski2014wild} under the null ($\Delta=0$ in Lemma 4 notation since we're working with the i.i.d. case), and that $\sqrt{n}\,\|\widehat T_n\|<\infty$, both with arbitrarily high probability. The latter holds since
$$
\sqrt{n}\,\|\widehat T_n\| 
\leq \sqrt{n}\,\|\widehat T_n - T_n\| + \sqrt{n}\,\| T_n\|
$$
where the first term tends to zero due to the assumed $m_{\a\c},m_{\b\c}$ scaling and the second is bounded, with arbitrarily high probability.
This proves (i).

Under the alternative, we obtain
\begin{align*}
\Big|\|\widehat T_n\|^2-\| T_n\|^2\Big| 
&\leq \Big|\|\widehat T_n\|-\| T_n\|\Big| \, \Big|\|\widehat T_n\|+\| T_n\|\Big| \\
&\leq \|(\widehat T_n - T_n)\|  \Big(\|\widehat T_n\|+\| T_n\|\Big) \\
&\to 0
\end{align*}
in probability, as $\| T_n\|<\infty$ by bounding the kernels and the $Q_i$'s by 1,
and
$$
\|\widehat T_n\| 
\leq \sqrt{n}\,\|\widehat T_n - T_n\| + \| T_n\|
$$
with again the first term tending to zero and the second one being bounded, with arbitrarily high probability.
This proves (ii).
\end{proof}

\begin{proof}[Proof of (iii)]
\citet[Theorem 2.2.a]{li2023towards}, with $\gamma=1$, $\beta\in(1,2]$, $\alpha=1$ (see \cref{assumpt:cme}), provides a rate of convergence in RKHS norm for the CME estimator when the true conditional mean embedding lies within a $\beta$-smooth subset of the RKHS ($\beta=1$ would correspond to the full RKHS).
There exists a constant $C>0$ independent of $m_{\a\c}$ and $\tau$ such that
$$
\|\hatmuac - \muac\| \leq C \tau m_{\a\c}^{-\frac{\beta-1}{2(\beta+p)}} \qquad \textrm{with probability at least } 1-5e^{-\tau}
$$
for $m_{\a\c}$ sufficiently large and for all $\tau>\ln(5)$,
where $p\in(0,1]$ induces a smoothness condition on the kernel used (its eigenvalues $\mu_i$ must decay as  $i^{-1/p}$).
As justified below, we will be interested in the scaled rate
$$
\|\sqrt{n}(\hatmuac - \muac)\|
\leq C \tau n^{\frac{1}{2}}m_{\a\c}^{-\frac{\beta-1}{2(\beta+p)}}.
$$
Now, assuming that $m_{\a\c} =\omega \Big(n^{\frac{\beta+p}{\beta-1}}\Big)$ (\emph{i.e.} $m_{\a\c}$ dominates $n^{\frac{\beta+p}{\beta-1}}$ asymptotically), then we obtain 
\begin{equation}
\label{eq:cme_rate_1}
\|\sqrt{n}(\hatmuac - \muac)\|
\to 0
\end{equation}
in probability as $n$ tends to infinity.
Note that this trivially implies that 
\begin{equation}
\label{eq:cme_rate_2}
\|\hatmuac\| 
\leq 
\|\sqrt{n}(\hatmuac - \muac)\|
+ \| \muac\| 
<\infty
\end{equation}
for all $n\in\mathbb{N}$.
The results in Equations (\ref{eq:cme_rate_1} and \ref{eq:cme_rate_2} also hold using $\B$ instead of $\A$.

We are now ready to show that $\|\sqrt{n} (\widehat T_n - T_n)\| \to 0$ in probability.
Note that 
\begin{align}
\sqrt{n} (\widehat T_n - T_n)
&= \frac{1}{\sqrt{n}} \sum_{i=1}^n Q_i \Bigg( (\phi_{\a}(\A_i) - \hatmuac(\C_i)) \otimes \phi_{\c}(\C_i) \otimes (\phi_{\b}(\B_i) - \hatmubc(\C_i)) \nonumber\\
&\hspace{2.4cm} -(\phi_{\a}(\A_i) - \muac(\C_i)) \otimes \phi_{\c}(\C_i) \otimes (\phi_{\b}(\B_i) - \mubc(\C_i))
\Bigg)\nonumber\\
&= \frac{1}{\sqrt{n}} \sum_{i=1}^n Q_i \,
    \phi_{\a}(\A_i) \otimes \phi_{\c}(\C_i) \otimes \Big(\mubc(\C_i)
    - \hatmubc(\C_i)
\Big) \label{eq:difference_term_1}\\
&\hspace{0.5cm}+ \frac{1}{\sqrt{n}} \sum_{i=1}^n Q_i \,
    \Big(\muac(\C_i) - \hatmuac(\C_i)\Big) \otimes \phi_{\c}(\C_i) \otimes \phi_{\b}(\B_i) \label{eq:difference_term_2}\\
&\hspace{-0.5cm}+ \frac{1}{\sqrt{n}} \sum_{i=1}^n Q_i \, 
    \Big(\hatmuac(\C_i) \otimes \phi_{\c}(\C_i) \otimes\hatmubc(\C_i) - \muac(\C_i) \otimes\phi_{\c}(\C_i) \otimes \mubc(\C_i)\Big). \label{eq:difference_term_3}
\end{align}
The terms (\ref{eq:difference_term_1}) and (\ref{eq:difference_term_2}) can be handled in a similar fashion, the term (\ref{eq:difference_term_3}) needs to be tackle separately.
Note that in the CIRCE case we would have only the first term (\ref{eq:difference_term_1}).

The term (\ref{eq:difference_term_1}) is 
\begin{align*}
&\left\|\frac{1}{\sqrt{n}} \sum_{i=1}^n Q_i \,
    \phi_{\a}(\A_i) \otimes \phi_{\c}(\C_i) \otimes \Big(\mubc(\C_i)
    - \hatmubc(\C_i)
\Big) \right\| \\
\leq\ & \frac{1}{n} \sum_{i=1}^n |Q_i|\left\| \,
    \phi_{\a}(\A_i) \otimes \phi_{\c}(\C_i)  \otimes\sqrt{n}\Big(\mubc(\C_i)
    - \hatmubc(\C_i)
\Big) \right\| \\
=\ & \frac{1}{n}\sum_{i=1}^n (k_{\a}(\A_i, \A_i)k_{\c}(\C_i, \C_i))^{1/2} \left\| \sqrt{n}\Big(\mubc(\C_i)
    - \hatmubc(\C_i)
\Big) \right\| \\
\leq\ & \frac{1}{n}\sum_{i=1}^n k_{\a}(\A_i, \A_i)^{1/2} k_{\c}(\C_i, \C_i) \left\| \sqrt{n}\Big(\mubc
    - \hatmubc
\Big) \right\|_{\mathrm{HS}} \\
\leq\ & \left\| \sqrt{n}\Big(\mubc- \hatmubc
\Big) \right\| \\
\to\ &0
\end{align*}
in probability by Equation (\ref{eq:cme_rate_1}), where we the triangle inequality (2nd line), product space definition (3rd line), $\|\muac(\c)\|\leq \|\muac\| k_{\c}(\c,\c)^{1/2}$ (4th line) and the facts that $k_{\a}$, $k_{\c}$ and $Q_i$ are all bounded by 1.

Similarly, the second term  (\ref{eq:difference_term_2}) converges to 0 in probability.

For the third term (\ref{eq:difference_term_3}), we have 
\begin{align*}
&\left\|    \frac{1}{\sqrt{n}} \sum_{i=1}^n Q_i \, \Big(\hatmuac(\C_i) \otimes \phi_{\c}(\C_i)\otimes\hatmubc(\C_i) - \muac(\C_i) \otimes \phi_{\c}(\C_i)\otimes\mubc(\C_i)\Big) \right\| \\
\leq\ &\frac{1}{n}\sum_{i=1}^n|Q_i|\left\|    \sqrt{n} \, \Big(\hatmuac(\C_i) \otimes \phi_{\c}(\C_i)\otimes\hatmubc(\C_i) - \muac(\C_i) \otimes \phi_{\c}(\C_i)\otimes\mubc(\C_i)\Big) \right\| \\
=\ &\frac{1}{n}\sum_{i=1}^n \sqrt{k_{\c}(\C_i, \C_i)}\norm{\sqrt{n} \, \brackets{\hatmuac(\C_i) \otimes\brackets{\hatmubc(\C_i)\pm \mubc(\C_i)} - \muac(\C_i) \otimes\mubc(\C_i)}} \\
\leq\ &\frac{1}{n}\sum_{i=1}^n \sqrt{k_{\c}(\C_i, \C_i)}\left(\norm{\sqrt{n} \, \hatmuac(\C_i) \otimes\brackets{\hatmubc(\C_i)- \mubc(\C_i)}} +\right.\\
&\qquad\qquad\qquad\left.\norm{\sqrt{n} \, (\hatmuac(\C_i)- \muac(\C_i)) \otimes\mubc(\C_i)}\right) \\
=\ &\frac{1}{n}\sum_{i=1}^n \sqrt{k_{\c}(\C_i, \C_i)}\left(\norm{\hatmuac(\C_i)}\norm{\sqrt{n} \, \brackets{\hatmubc(\C_i)- \mubc(\C_i)}} +\right.\\
&\qquad\qquad\qquad\left.\norm{\mubc(\C_i)}\norm{\sqrt{n} \, (\hatmuac(\C_i)- \muac(\C_i))}\right) \\
\leq\ &\frac{1}{n}\sum_{i=1}^n \brackets{k_{\c}(\C_i, \C_i)}^{3/2}\brackets{\norm{\hatmuac}\norm{\sqrt{n} \, \brackets{\hatmubc- \mubc}} + \norm{\mubc}\norm{\sqrt{n} \, (\hatmuac- \muac}} \\
\leq\ &\Big\|\hatmuac\Big\|
\Big\|\sqrt{n}\, (\hatmubc - \mubc)\Big\|
+
\Big\|\mubc\Big\|
\Big\|\sqrt{n}\,(\hatmuac-\muac)\Big\| \\
\to\ &0
\end{align*}
in probability by Equation (\ref{eq:cme_rate_1}), using the triangle inequality (second line), product spaces definition (third line), again triangle inequality (4th) and product spaces definition (5th) and then $\|\muac(\c)\|\leq \|\muac\| k_{\c}(\c,\c)^{1/2}$ and the fact that $k_{\c}(\c,\c)$ and $Q_i$ are bounded by 1, with the final result holding since $\big\|\hatmuac\big\|<\infty$ by Equation (\ref{eq:cme_rate_2}).

We deduct that $\|\sqrt{n} (\widehat T_n - T_n)\| \to 0$ in probability, which shows (iii).

This concludes the proof of \cref{th:wild}.
\end{proof}

\section{Measures of conditional independence}
\label{app:sec:other_methods}
\subsection{Kernel ridge regression}
For completeness, we include the kernel ridge regression algorithm with leave-one-out error (for RKHS-valued outputs, see \citealp{pogodin2022efficient}) in \cref{alg:krr}. While it is possible to optimize the LOO errors via gradient descent over $\lambda$ and kernel parameters, in practice this can be computationally demanding. In our experiments, we choose the parameters over a pre-defined grid.

\begin{figure}[ht]
\centering
\input{paper_content/algorithms/krr_alg}
\end{figure}
\subsection{KCI}
\label{app:sec:kci}
\paragraph{Notes on the original implementation}
First, we note that the set $E_1'=\{g\in L_{\A}^2 : \expect\sqbrackets{g(\A)\given \C}=0 \}$ used in the original KCI formulation \citep{zhang2012kernel} is not present in Theorem 1 of \citet{daudin1980partial}. The theorem is proven for $E_1=\{g\in L_{\A\C}^2 : \expect\sqbrackets{g(\A,\C)\given \C}=0 \}$, and stated for $g\in L^2_{\A\C}$ and $g\in L^2_{\A}$. However, we note that $E_1'$ functions can be constructed as $g=g' - \expect\sqbrackets{g'\given \C}$ for $g'\in L_{\A}^2$. Therefore, given $h\in \{h\in L^2_{\B\C} : \expect\sqbrackets{h(\B,\C)\given \C}=0\}$, we have $\EE{gh}=0$ iff $\EE{g'h}=0$ since $\EE{\expect\sqbrackets{g'\given \C}h(\B,\C)}=\EE{\expect\sqbrackets{g'\given \C}\expect\sqbrackets{h(\B,\C)\given \C}}=0$. Therefore, the proof for this case follows the $g\in L_{\A}^2$ case of \cite{daudin1980partial} (see Corollary A.3 of \citealt{pogodin2022efficient} for the required proof).

Second, the original KCI definition estimated the $\phi_{\b\c}(\B,\C) - \mu_{\B\C\given \C}(\C)$ residual. For radial basis kernels, this can be decomposed into $\phi_{\c}(\C)\otimes\brackets{\phi_{\b} - \mu_{\B\given \C}}$, avoiding estimation of $\mu_{\C\given\C}=\phi_{\c}(\C)$ (see \citealp{pogodin2022efficient}). This is crucial to ensure consistency of the regression, because estimating $\mu_{\C\given\C}$ is equivalent to estimating the identity operator --- for a characteristic RKHS $\rkhs{\c}$, this operator is not Hilbert-Schmidt and thus the regression problem is not well-specified. This is described in Appendix B.9 of \citet{mastouri2021proximal} (also see Appendix D, $Y=X$ case, of \citealt{li2023optimal}). In practice, computing the $\C\!\rightarrow\!\C$ regression can lead to significant problems, as described by \citet[Appendix B.9 and Figure 4]{mastouri2021proximal}. For 1d Gaussian $\C$, the CME estimator correctly estimates the identity on the high density regions of the training data; in the tail of the distribution, however, the estimate becomes heavily biased since those points are rarely present in the training data. 

\paragraph{Gamma approximation for KCI}
The Gamma approximation of $p$-values was used by \cite{zhang2012kernel} (and previously suggested for independence testing with HSIC by \citealt{gretton2007kernel}). This approximation uses the fact that under the null, the kernel-based statistics are distributed as a weighted (infinite) sum of $\chi^2$ variables $\sum \lambda_i \xi_i^2$ for $\xi_i\sim\NN(0, 1)$. While this sum is intractable, it can be approximated as a Gamma distribution $p(t)=t^{k-1}\frac{e^{-t/\theta}}{\theta^k\gamma(k)}$ with parameters (for the KCI estimator $\frac{1}{n^2}\tr(KL)$ with centered matrices $K$ and $L$) estimated as
\begin{align*}
    \mu \,&= \frac{1}{n}\sum_{i=1}^n K_{ii}L_{ii}\,,\quad \sigma^2 = 2\frac{1}{n^2}\sum_{i,j}K_{ij}^2L_{ij}^2\,,\qquad k = \frac{\mu}{\sigma^2}\,,\quad\theta = \frac{\sigma^2}{n\,\mu}\,.
\end{align*}
The Gamma approximation is not exact, and higher-order moment matching methods can improve approximation quality \citep{bodenham2016comparison}.
In our tests with KCI and its variants, the Gamma approximation produced $p$-values that are consistently lower than the required level under $\hzero$ (\cref{fig:task1_gamma}, left for wild bootstrap vs. right for Gamma approximation). This might be attributed not to the approximation itself (e.g. it performed well in \citep{gretton2007kernel,zhang2012kernel}) but rather to large CME estimation errors in our tasks that introduce bias in mean/variance estimation for the method. For CIRCE, the situation was reversed, which could also be potentially explained by (the lack of) CME estimation bias in one of its terms.

\begin{figure}[ht]
    \centering
    \includegraphics[width=\textwidth]{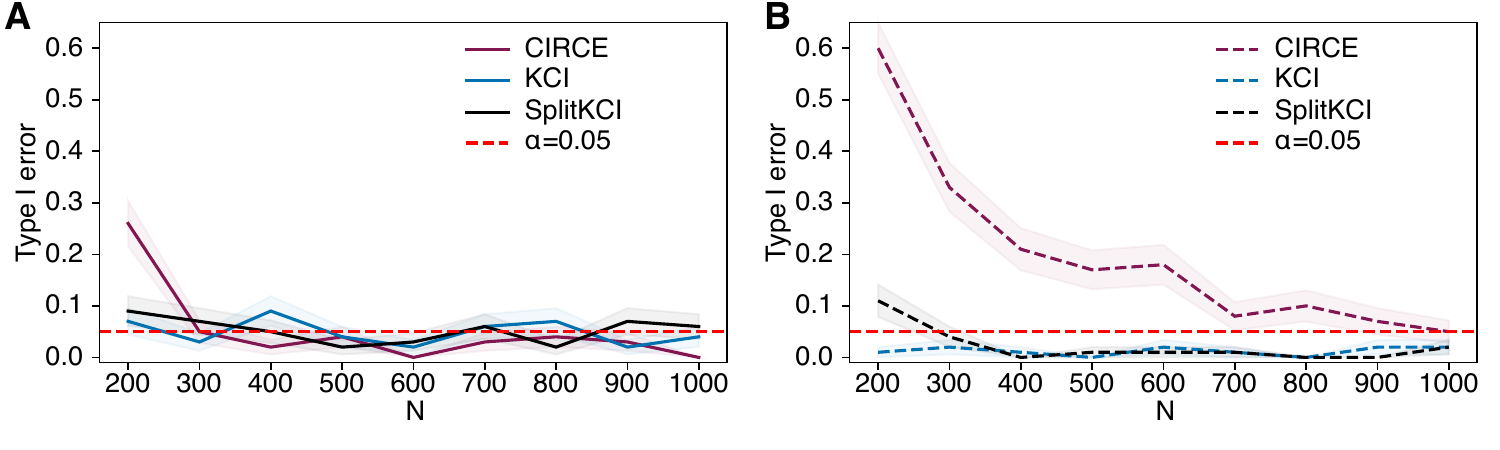}
    \caption{Type I error comparison between and \textbf{A.} wild bootstrap and \textbf{B.} the Gamma approximation methods of $p$-value estimation for Gaussian data (see \cref{app:toy_tasks}). Lines/shaded area: mean/$\pm$SE over 100 trials, $\alpha=0.05$.}
    \label{fig:task1_gamma}
\end{figure}

\begin{figure}[ht]
\centering
\input{paper_content/algorithms/train_test_alg}
\end{figure}

For SplitKCI, we use the train/test splitting procedure outlined in \cref{subsec:ttsplit}. See \cref{alg:train_test_split} for a more detailed description. Note that this can be viewed as testing with the CIRCE operator \eqref{eq:circe}, in which we test $\C\indep\A\given\C$,
where the kernel over the conditioning variable is just $k_\c(\c, \c') = 1$.

\subsection{GCM}
\label{app:sec:gcm}
    The Generalised Covariance Measure (GCM, \citealp{shah2020hardness}) relies on estimates of the conditional mean
    \begin{equation*}
        \widehat f(\C) \approx \EE{\A\given \C}\,,\quad \widehat g(\C) \approx \EE{\B\given \C}\,.
    \end{equation*}

    For $n$ samples $\a_i\in \RR^{d_{\a}}$ and $\b_i\in \RR^{d_{\b}}$, define
    \begin{align*}
        R_{ikl} \,&= (\a_i - f(\c_i)))_k (\b_i - g(\c_i))_l\,,\quad \bar R_{kl} = \frac{1}{n}\sum_{i=1}^n R_{ikl}\,,\quad T_{kl} = \frac{\sqrt{n}\bar R_{kl}}{\sqrt{\frac{1}{n}\sum_{j=1}^n (R_j)_{kl}^2 - (\bar R_{kl})^2}}\,.
    \end{align*}
    
    For 1-dimensional $\A$ and $\B$ (and arbitrary $\C$), the $p$-value is computed as $p=2\,(1-\Phi(|T_{11}|))$ for the standard normal CDF $\Phi$.

    For higher-dimensional $\A$ and $\B$ (with $d_{\a} d_{\b}\geq 3$), the statistic becomes $S=\max_{k,l}|T_{kl}|$, and the $p$-value is computed by approximating the null distribution of $T_{kl}$ as a zero-mean Gaussian with the covariance matrix (for a $n$-dimensional vector $R^{kl}$ corresponding to all data points and dimensions $k,l$)
    \begin{equation*}
        \Sigma_{kl,pd}=\frac{\frac{1}{n}\sum_{i=1}^n R_{ikl}R_{ipd} - \bar R_{kl}\bar R_{pd}}{\sqrt{\frac{1}{n}\sum_{i=1}^n R_{ikl}^2 - \bar R_{kl}^2} \sqrt{\frac{1}{n}\sum_{i=1}^n R_{ipd}^2 - \bar R_{pd}^2}}\,.
    \end{equation*}
\subsection{RBPT2}
\label{app:sec:rbpt2}

The Rao-Blackwellized Predictor Test (\citealp{polo2023conditional}; ``2'' stands for the test variant that doesn't require approximation of $P(\B\given \C)$) first trains a predictor $g(\B,\C)$ of $\A$, e.g. by computing $g(\B,\C)\approx \EE{\A\given \B,\C}$. (The original paper used a linear regression; we test this setup and also a CME with a Gaussian kernel.) Then, it builds a second predictor through CME as $h(\C) = \EE{g(\B,\C)\given \C}$ (the original paper used a polynomial kernel of degree two; we obtain similar results for a Gaussian kernel for linear $g$ and a set of Gaussian/polynomial kernels for a Gaussian $g$). Then, for a convex loss $l$, the test statistic $S$ is computed as
\begin{equation*}
    T_i = l(h(\c_i), \a_i) - l(g(\b_i,\c_i), \a_i)\,,\quad  S= \frac{\sqrt{n}\sum_{i=1}^{n} T_i}{\sqrt{\sum_{i=1}^{n} T_i^2 - (\sum_{i=1}^{n} T_i)^2}}\,.
\end{equation*}
The $p$-value is then computed as $p=1-\Phi(S)$ (one-sided test); the basic intuition is that if $\B$ has no extra information about $\A$ apart from the joint $\C$-dependence, the second predictor $h$ will perform as well as the first one $g$.

\begin{figure}[ht]
    \centering
    \includegraphics[width=0.5\textwidth]{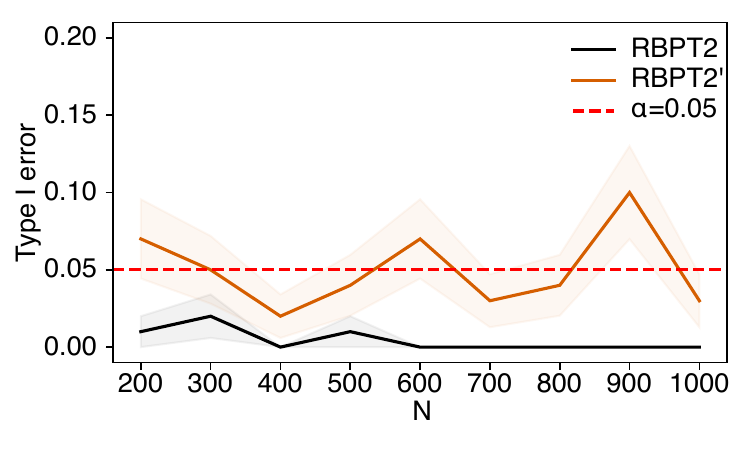}
    \caption{Type I error comparison between RBPT2 (black) vs its unbiased version RBPT2' (orange) for Gaussian data (see \cref{app:toy_tasks}). Lines/shaded area: mean/$\pm$SE over 100 trials, $\alpha=0.05$.}
    \label{fig:task1_rbpt_bias}
\end{figure}

For \citet{polo2023conditional}, the loss (for one-dimensional $\A,\B$) was the mean squared error loss $l(g, \a)=(g-\a)^2$. For $d_{\a}$-dimensional $\a$, we also used the MSE loss $l(g, \a)=\|g-\a\|_2^2$ and found the test to consistently produce much lower Type I error than the nominal level $\alpha$. We show an example of this behaviour for Gaussian data in \cref{fig:task1_rbpt_bias} (black line); for the RatInABox task in the same setup as  in \cref{fig:rat_standard}, the test would always reject regardless of the ground truth (not shown).
This indicates that the statistic $S$ doesn't follow a standard normal distribution. We argue that this happens due to the CME estimation error, and propose a debiased version of RBPT2 (which computes correct $p$-values; \cref{fig:task1_rbpt_bias}, orange line):
\begin{theorem}
    Denote the approximation error of $\EE{\A\given \B,\C}$ via $g(\B,\C)$ as $\delta g$:
    \begin{equation*}
        g(\B,\C) = \EE{\A\given \B,\C} + \delta g(\B,\C)\,.
    \end{equation*}
    Assume that the second regression $h$ uses much more data (i.e. the unbalanced data regime in the main text) and so correctly estimates $h=\EE{g(\B,\C)\given \C}$, and that RBPT2 uses the MSE loss $l$.

    Then, under $\hzero$, the unnormalized test statistic $T(\A,\B,\C)=\|h(\C)-\A\|_2^2-\|g(\B,\C)-\A\|_2^2$ has a negative bias:
    \begin{equation*}
        \EE{T(\A,\B,\C)} =_{\hzero} -\EE{\|g(\B,\C)- h(\C)\|_2^2}.
    \end{equation*}
\end{theorem}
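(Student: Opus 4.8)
The plan is to expand the two squared errors around the common conditional mean $m(\C) := \EE{\A \given \C}$, use $\hzero$ to eliminate the cross terms, and recognize what remains as a variance-decomposition identity.

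First I would invoke the null: since $\A \indep \B \given \C$, we have $\EE{\A \given \B,\C} = m(\C)$, so the learned predictor decomposes as $g(\B,\C) = m(\C) + \delta g(\B,\C)$, and — using the unbalanced-regime assumption that the second regression is exact — $h(\C) = \EE{g(\B,\C)\given\C} = m(\C) + \EE{\delta g(\B,\C)\given\C}$. Introducing the residual $\varepsilon := \A - m(\C)$, the two facts I will lean on are $\EE{\varepsilon \given \C} = 0$ (always) and $\EE{\varepsilon \given \B,\C} = 0$ (under $\hzero$).

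Next I would expand $\|h(\C) - \A\|_2^2 = \|h(\C) - m(\C)\|_2^2 - 2\langle h(\C) - m(\C),\,\varepsilon\rangle + \|\varepsilon\|_2^2$ and the analogous expression for $g(\B,\C)$; subtracting cancels the $\|\varepsilon\|_2^2$ terms and gives $T(\A,\B,\C) = \|h(\C) - m(\C)\|_2^2 - \|g(\B,\C) - m(\C)\|_2^2 - 2\langle h(\C)-m(\C),\,\varepsilon\rangle + 2\langle \delta g(\B,\C),\,\varepsilon\rangle$. Taking expectations, the first cross term vanishes by the tower rule because $h(\C)-m(\C)$ is $\C$-measurable and $\EE{\varepsilon\given\C}=0$; the second vanishes because $\delta g(\B,\C)$ is $(\B,\C)$-measurable and $\EE{\varepsilon\given\B,\C}=0$ — this is exactly the point where $\hzero$ is used.

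Finally I would apply the coordinatewise law of total variance to $D := g(\B,\C) - m(\C) = \delta g(\B,\C)$, whose conditional mean is $\EE{D\given\C} = h(\C) - m(\C)$: then $\EE{\|D\|_2^2} - \EE{\|\EE{D\given\C}\|_2^2} = \EE{\|D - \EE{D\given\C}\|_2^2} = \EE{\|g(\B,\C) - h(\C)\|_2^2}$, so $\EE{T(\A,\B,\C)} =_{\hzero} -\EE{\|g(\B,\C) - h(\C)\|_2^2}$. There is no hard analytic step here; the care required is bookkeeping — tracking which $\sigma$-algebra each factor is measurable with respect to — and noting that without $\hzero$ the uncancelled term $\EE{\|\EE{\A\given\B,\C} - m(\C)\|_2^2}$ would survive and destroy the purely negative bias. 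I would also state explicitly that the identity $h = \EE{g\given\C}$ uses the unbalanced-data assumption, the finite-sample version carrying an additional correction from the error in estimating $h$.
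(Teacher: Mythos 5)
Your proposal is correct and takes essentially the same route as the paper: both decompose $g$ and $h$ around the common conditional mean $m(\C)=\EE{\A\given\C}$ under $\hzero$, expand the two squared errors so the $\|\A-m(\C)\|_2^2$ terms cancel, kill the cross terms with $\A-m(\C)$ by conditioning on $\C$ and on $(\B,\C)$ respectively, and observe that the surviving $\EE{\|\EE{\delta g\given\C}\|_2^2} - \EE{\|\delta g\|_2^2} = -\EE{\|\delta g - \EE{\delta g\given\C}\|_2^2}$, which is precisely the paper's final line; your explicit framing of this last step as the law of total variance is just a named version of the algebra the paper carries out directly.
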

\begin{proof}
    Under $\hzero$, we have that 
    \begin{align*}
        g(\B,\C) \,&= \mu_{\A\given \B\C}(\B,\C) + \delta g(\B,\C) =_{\hzero} \mu_{\A\given \C}(\C) + \delta g(\B,\C)\,,\\
        \quad h(\C) \,&= \mu_{\A\given \C}(\C) + \EE{\delta g(\B,\C)\given \C} = \mu_{\A\given \C}(\C) + \mu_{\delta g}(\C)\,. 
    \end{align*}
    We can therefore decompose the loss difference as
    \begin{align*}
        &\|h(\C) - \A\|_2^2 - \|g(\B,\C) - \A\|_2^2  =_{\hzero}  \|\mu_{\A\given \C}(\C)- \A + \mu_{\delta g}(\C) \|_2^2 - \|\mu_{\A\given \C}(\C)- \A + \delta g(\B,\C)\|_2^2\\
        &\qquad=2\,(\mu_{\A\given \C}(\C)- \A)\T \mu_{\delta g}(\C) + \|\mu_{\delta g}(\C)\|_2^2 - 2\,(\mu_{\A\given \C}(\C)- \A)\T \delta g(\B,\C) - \|\delta g(\B,\C)\|_2^2\,.
    \end{align*}
    We can take the expectation of this difference under $\hzero$, zeroing out the cross-product terms since $\A\indep \B\given \C$:
    \begin{gather*}
        \EE{\|h(\C) - \A\|_2^2 - \|g(\B,\C) - \A\|_2^2}  =_{\hzero} -\EE{\|\delta g(\B,\C)\|_2^2 - \|\mu_{\delta g}(\C)\|_2^2}\\
        = -\EE{\|\delta g(\B,\C)- \mu_{\delta g}(\C)\|_2^2} = -\EE{\|g(\B,\C)- h(\C)\|_2^2}\,.
    \end{gather*}
\end{proof}
Conveniently, we can estimate this bias from the $g,h$ estimates, resulting in a bias-corrected RBPT2 (called RBPT2' in the main text) with a test statistic
\begin{equation*}
    T_i = \|h(\c_i) - \a_i\|_2^2 - \|g(\b_i,\c_i) - \a_i\|_2^2 + \|g(\b_i,\c_i) - h(\c_i)\|_2^2\,.
\end{equation*}
While the derivation holds under the null, we note that adding a positive term to $T_i$ should decrease Type II errors since we're testing for large positive deviations in $T_i$.

\section{Experimental details and additional experiments}
\label{app:experiments}
In all cases, we ran wild bootstrap 1000 times to estimate $p$-values. For kernel ridge regression, the $\lambda$ values were chosen (via leave-one-out) from $[10\,\delta, \dots,10^7\delta]$ (log scale) for SVD tolerance $\delta=\|K_{\C}\|_2\,\eps$  and machine precision $\eps$ (for \texttt{float32} values). 

For the experiments with the post-nonlinear model (\cref{subseq:postnonlin_res}) and the additional synthetic task (\cref{app:toy_tasks}) with KCI, SplitKCI, and CIRCE, the kernels over $\A$ and $\B$ were Gaussian $k(x,x')=\exp(-\|x-x'\|_2^2/\sigma^2)$ (note that $\sigma^2$ is not multiplied by two in our implementation) with $\sigma^2=1$ as the data was normalized. The kernel over $\C$ for $\mu_{\B\given\C}$ was always Gaussian, with $\sigma^2$ chosen (via leave-one-out) from [0.1, 0.2, 0.5, 1.0, 1.5, 2.0]. For GCM, we use the same setup for $\widehat f(\C) \approx \EE{\A\given \C},\, \widehat g(\C) \approx \EE{\B\given \C}$ (see the definitions in \cref{app:sec:gcm}). For RBPT2, we used the linear kernel for $g(\B,\C)$ and the Gaussian kernel for $h(\C)$ with the same parameter choice as for $\hatmuac$ (see the definitions in \cref{app:sec:rbpt2}). 

For the synthetic rat data experiments and KCI-style methods, the kernels over $\A$ and $\B$ remained the same. For $\C$, we parametrized the Gaussian kernel as $k(x,x')=\exp(-\sum_{i=1}^{d}(x_i-x'_i)^2\gamma^2_i/d)$. For $\hatmubc$, we chose $\gamma_i=\gamma$ (the same for all $i$) from [0.1, 0.2, 0.5, 1.0, 1.5, 2.0]. For $\hatmuac$, we additionally chose from $\gamma_1=\gamma_2=0$ and $\gamma_3=\gamma_4=0$ options (no head direction/no position coordinates) with $d=2$ in those cases. As $\A$ only depends on head direction, such features selection improves regression error. The same approach was used for GCM.
For RBPT, we used the Gaussian kernel with $\sigma^2=1$ for $g$; for $h$, we used the same kernel as for $\hatmuac$. This was done to reflect the increased model flexibility in other measures; in the original experiments of \cite{polo2023conditional}, $g$ was found through (non-ridge) regression and $h$ used a polynomial kernel of degree 2. We found this setup to be similar to our first setup in our preliminary experiments (not shown).

For all synthetic data experiments, we used 100 random seeds, plotting plotted mean $\pm$SE for the error rates. The train/test splitting procedure for SplitKCI (\cref{alg:train_test_split}) used $\alpha=0.5$ and test ratio $\beta$ ranging from 0.1 to 0.5, such that $\beta N \geq \max(100, 0.1\,N)$ with a step of $\beta N$ of 50.

For the car insurance data, we followed exactly the same procedure as \cite{polo2023conditional} for RBPT, RBPT2, RBPT2' and GCM. For kernel-based measures and the simulated $\hzero$, the kernel over $\A$ was Gaussian with $\sigma^2$ determined from sample variance over train data; the kernel over $\B$ was a simple $k_{\b}(\b,\b')=\ind[\b=\b']$ since $\B$ is binary; the kernel over $\C$ was Gaussian with $\sigma^2$ chosen from $[0.5, 1.0, 2.0]\cdot \widehat\sigma^2$ for train sample variance $\widehat\sigma^2$. For the actual $p$-values, we split the train data in two randomly for independent $\mu_{\A\given \C}$ and $\mu_{\B\given \C}$ regression (to avoid the high memory requirements of the full dataset). The rejection indicators $p_{ij}$ for seed $i$ for the simulated $\hzero$ task were computed for each company $j$ individually. As in \cite{polo2023conditional}, we did not make the variance correction through the law of total variance, essentially estimating average rejection rate across companies, since the companies are fixed and not sampled randomly. Therefore, the standard errors are correct (for this quantity). 

For KCI, CIRCE and SplitKCI with the Gamma approximation, we used the biased HSIC-like estimator in \cref{eq:kci_estimator} (note that the Gamma approximation is derived for the biased one). For wild bootstrap, we used the unbiased HSIC estimator \citep{song2007supervised} (the biased one performed marginally worse; not shown) for kernel matrices $K,L$:
\begin{equation}
    \frac{1}{m(m-3)}\brackets{\tr\brackets{KL} + \frac{1\T K1 1\T L 1}{(m-1)(m-2)} - \frac{2}{m-2}1^T KL 1}\,.
    \label{app:eq:hsic_ub}
\end{equation}

\subsection{Additional toy task}
\label{app:toy_tasks}

We sample  a circular $\C$ with $\A$ and $\B$ being noisy version of $\C$, with conditional dependence along the first coordinate under $\hone$. For $\hzero$, 
\begin{gather*}
    \C=\frac{\xi_1}{\|\xi_1\|_2}\,,\quad\A = \C + \gamma\,\xi_2+\gamma\,\xi_\a\,,\quad\B = \C + \gamma\,\xi_3+\gamma\,\xi_\b\,.
\end{gather*}

For $\hone$, part of the noise in $\A$ and $\B$ is shared through $\xi_{\a\b}$,
\begin{gather*}
    \C=\frac{\xi_1}{\|\xi_1\|_2}\,,\quad\A = \C + \gamma\,\xi_2+\gamma\,\begin{bmatrix}
        \xi_{\a\b} \\ \xi_{\a\a}
    \end{bmatrix}\,,\quad\B = \C + \gamma\,\xi_3+\gamma\,\begin{bmatrix}
        \xi_{\a\b} \\ \xi_{\b\b}
    \end{bmatrix}\,,
\end{gather*}
where $\xi_1,\xi_2,\xi_3,\xi_\a,\xi_\b\sim\NN(0, I_2)$, $\xi_{\a\b},\xi_{\a\a},\xi_{\b\b}\sim\NN(0, 1)$ and 
for $\gamma=0.05$.

For both \cref{fig:task1_gamma,fig:task1_rbpt_bias}, we varied the total budget from $N=200$ to $N=1000$, always using 100 points for testing and the rest for training.

\subsection{Post-nonlinear model}
\label{app:sec:kci_task}

In addition to the main results in \cref{subseq:postnonlin_ttsplit,subseq:postnonlin_res}, we evaluated all methods (including CIRCE) without train/test splitting (\cref{app:fig:kci_task_no_split}) and with a fixed split of $n=100$ test points (\cref{app:fig:kci_task_split}). In all cases, CIRCE showed trivial results (almost perfect Type I error but also Type II error close to 1, meaning that it would mostly just reject the alternative). RBPT' suffered from not having a train/test split, while KCI and SplitKCI, in contrast, worked well even without a split on this task.

\begin{figure}[ht]
     \centering
     \includegraphics[width=0.95\textwidth]{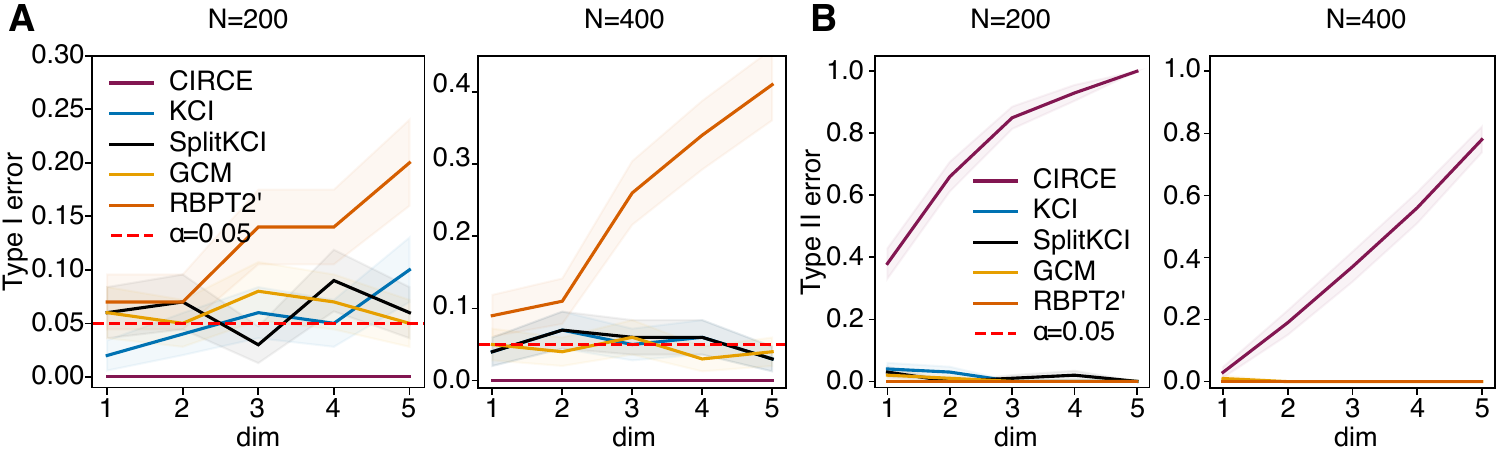}
     \caption{Post-nonlinear model experiments for increasing dimensionality of the task. Unlike in \cref{fig:kci_task_best}, all methods use all data points for both testing and training. \textbf{A.} Type I error for $N=200$ (left) and $N=400$ (right) data points. \textbf{B.} Type II error for $N=200$ (left) and $N=400$ (right) data points. Lines/shaded area: mean/$\pm$SE over 100 trials, $\alpha=0.05$.}
     \label{app:fig:kci_task_no_split}
\end{figure}

\begin{figure}[ht]
     \centering
     \includegraphics[width=0.95\textwidth]{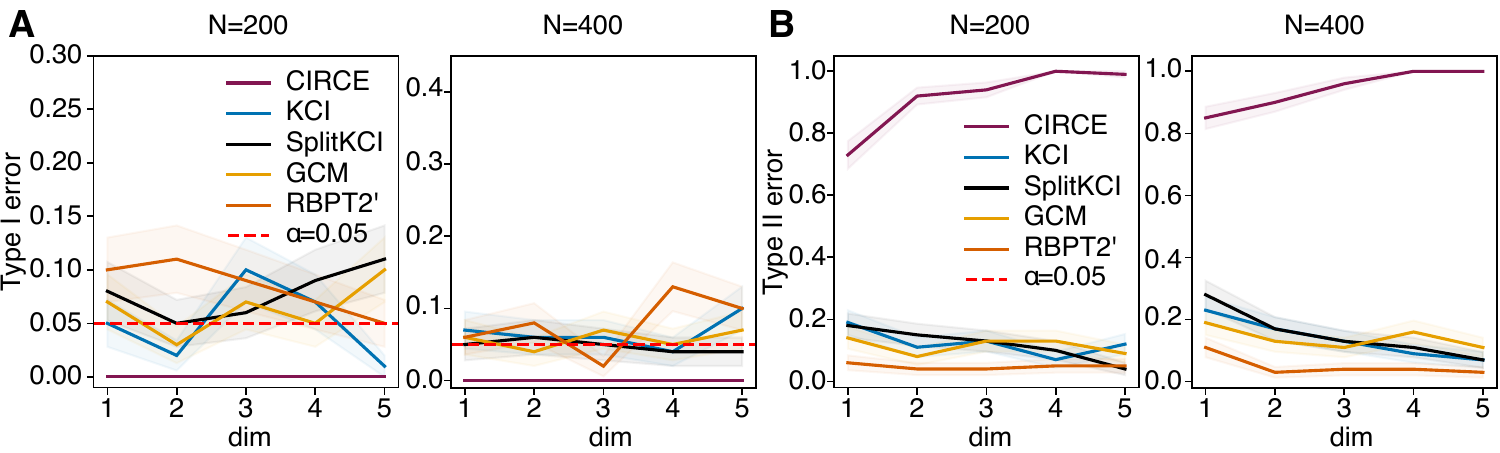}
     \caption{Post-nonlinear model experiments for increasing dimensionality of the task. Unlike in \cref{fig:kci_task_best}, all methods use 100 data points for testing and the rest for training. \textbf{A.} Type I error for $N=200$ (left) and $N=400$ (right) data points. \textbf{B.} Type II error for $N=200$ (left) and $N=400$ (right) data points. Lines/shaded area: mean/$\pm$SE over 100 trials, $\alpha=0.05$.}
     \label{app:fig:kci_task_split}
\end{figure}

\subsection{Synthetic neural data}
\label{app:sec:synthetic}

\begin{figure}[ht]
     \centering
     \includegraphics[width=0.95\textwidth]{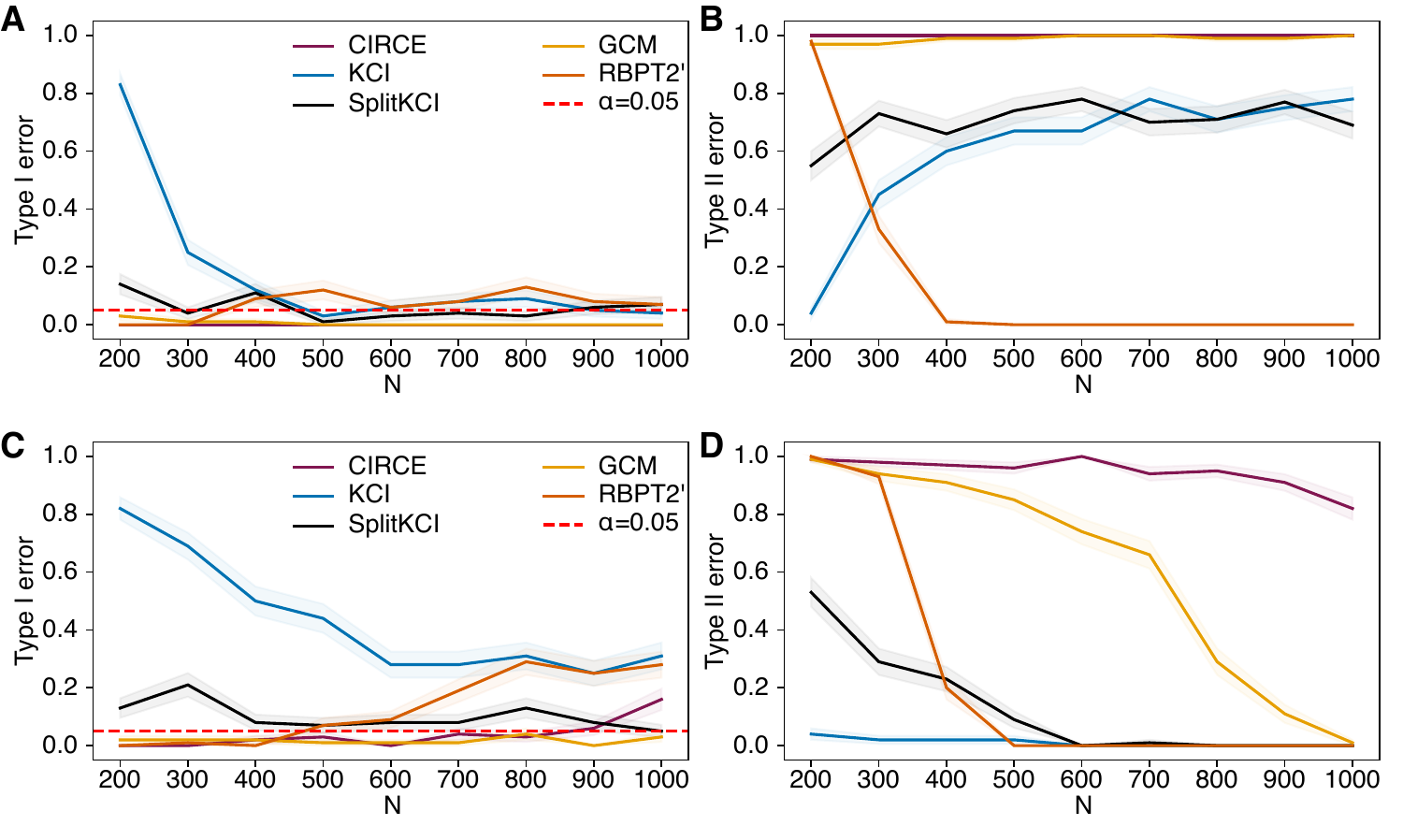}
     \caption{Synthetic neural data experiments for increasing $(A,B,C)$ dataset size. \textbf{A.} Type I error for using 100 points (out of $N$) as test data. \textbf{B.} Type II error. \textbf{C-D.} Same as \textbf{A-B} but for $N/2$ points as test data. Lines/shaded area: mean/$\pm$SE over 100 trials, $\alpha=0.05$.}
     \label{app:fig:rat_std_100_05}
\end{figure}

\begin{figure}[ht]
     \centering
     \includegraphics[width=0.95\textwidth]{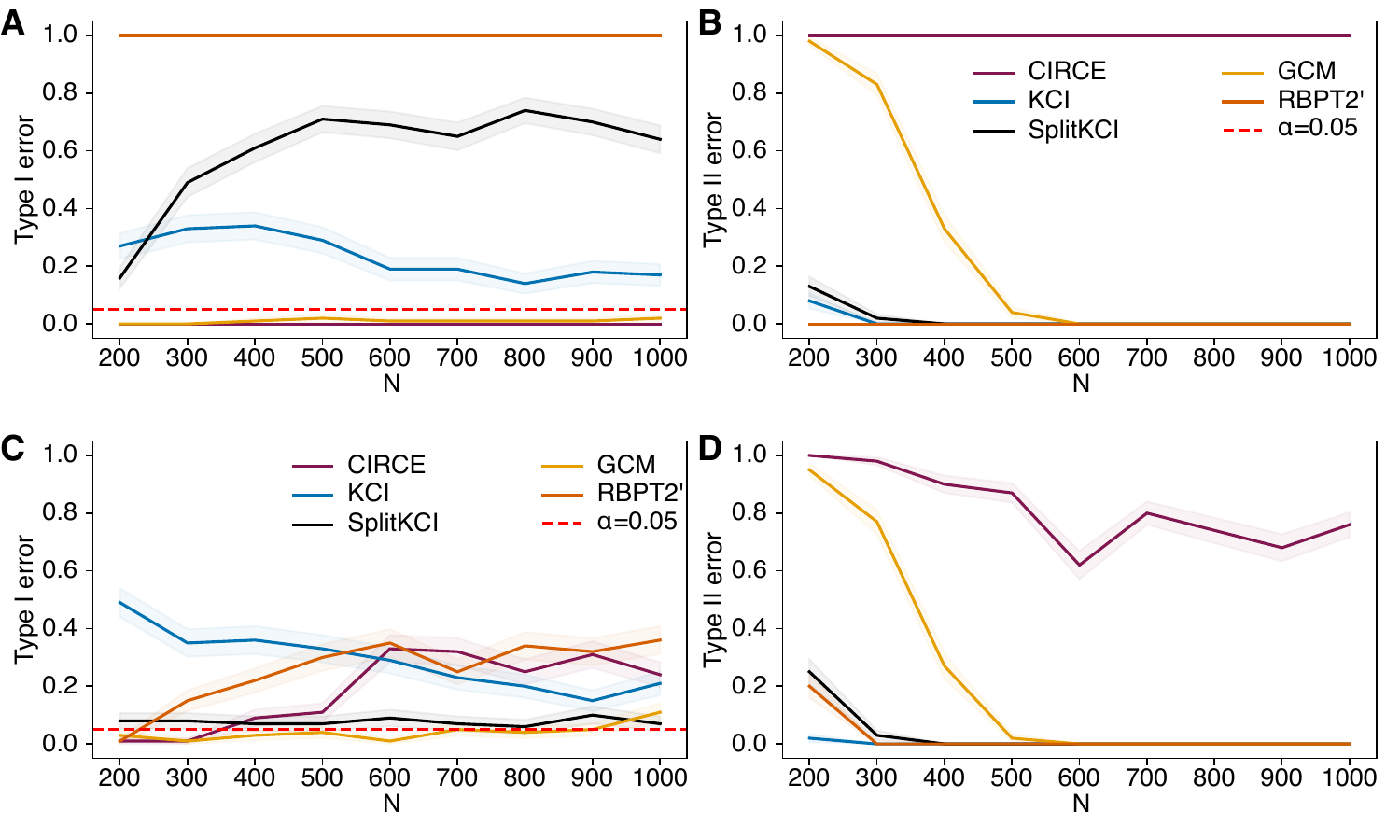}
     \caption{Synthetic neural data experiments for increasing $(A,B,C)$ dataset size. \textbf{A.} Type I error without a train/test split (i.e. using the same $N$ points for each part). \textbf{B.} Type II error. \textbf{C-D.} Same as \textbf{A-B} but for an independent set of $N$ test points (control experiment). Lines/shaded area: mean/$\pm$SE over 100 trials, $\alpha=0.05$.}
     \label{app:fig:rat_std_no_split_control}
\end{figure}

\begin{figure}[ht]
     \centering
     \includegraphics[width=0.95\textwidth]{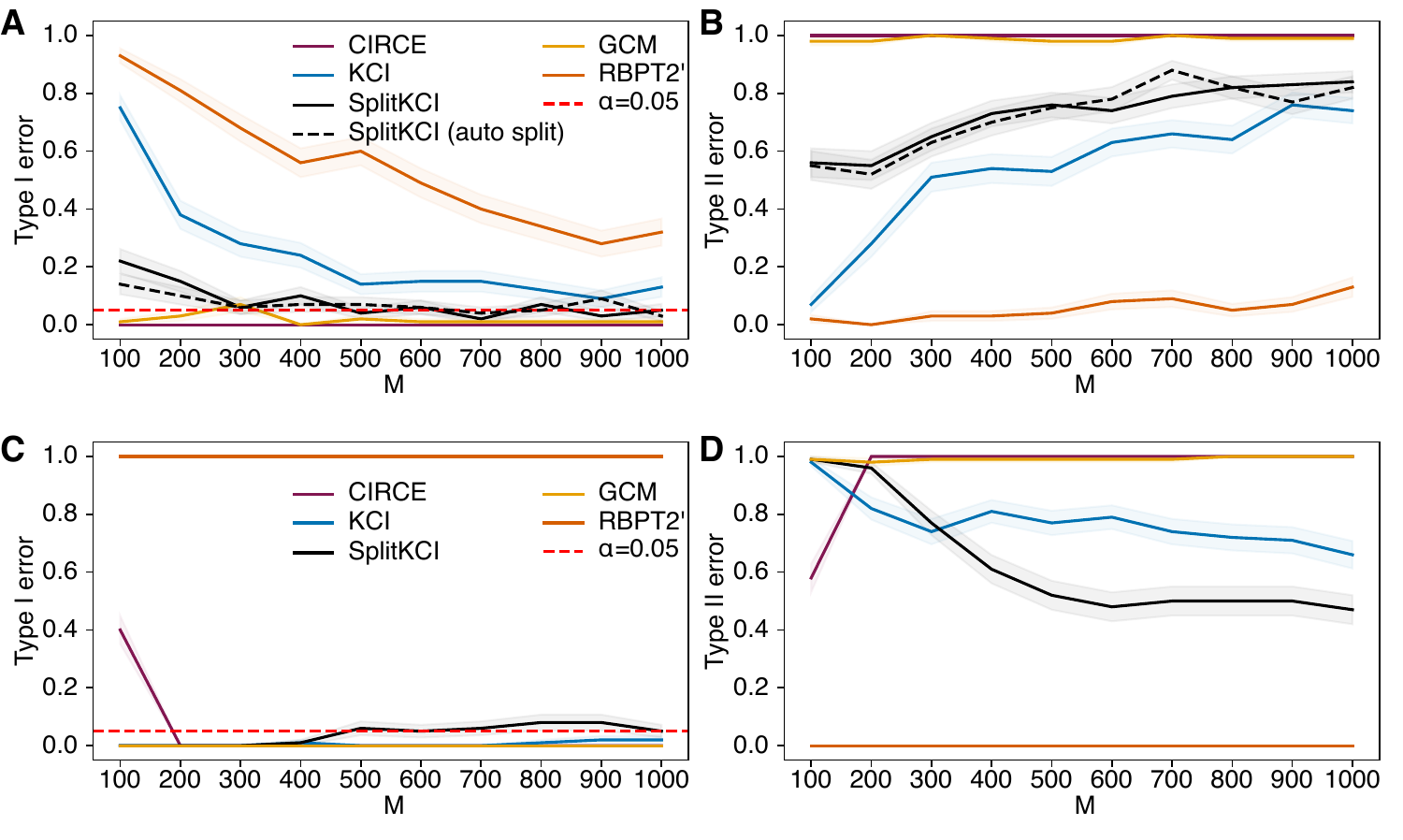}
     \caption{Synthetic neural data experiments for 200 $(A,B,C)$ dataset size, but increasing amount of auxiliary $(B,C)$ data. \textbf{A.} Type I error for 100/100 train/test split. \textbf{B.} Type II error. \textbf{C-D.} Same as \textbf{A-B}, but without a train/test split. Lines/shaded area: mean/$\pm$SE over 100 trials, $\alpha=0.05$.}
     \label{app:fig:rat_aux_200}
\end{figure}

\begin{figure}[ht]
     \centering
     \includegraphics[width=0.95\textwidth]{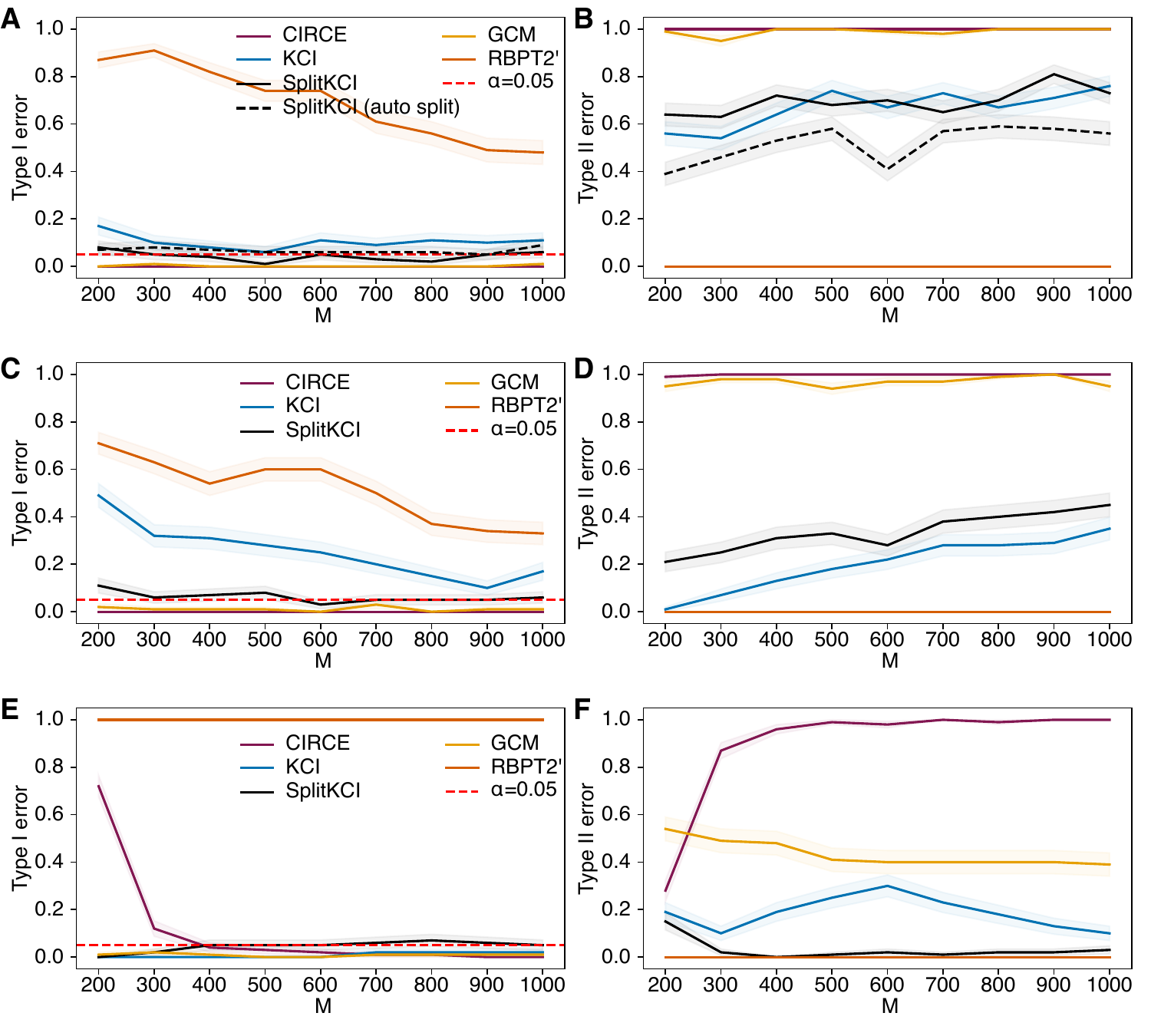}
     \caption{Synthetic neural data experiments for 400 $(A,B,C)$ dataset size, but increasing amount of auxiliary $(B,C)$ data. \textbf{A.} Type I error for 300/100 train/test split. \textbf{B.} Type II error. \textbf{C-D.} Same as \textbf{A-B}, but for 200/200 train/test split. \textbf{E-F.} Same as \textbf{A-B}, but without a train/test split. Lines/shaded area: mean/$\pm$SE over 100 trials, $\alpha=0.05$.}
     \label{app:fig:rat_aux_400}
\end{figure}

To generate the data, we used the \href{https://github.com/RatInABox-Lab/RatInABox}{RatInABox toolbox} (MIT license) \cite{George2024}. We used a rectangular box with four walls blocking the central part of the box, so the simulated rat runs in circles around the center. This adds a marginal dependence between head direction and position. 

We then simulated 100 head directional cells $\A$ and 100 grid cells $\B$ with their underlying firing rate $r_{\a}(t),r_{\b}(t)$ bounded between 0 and 1 Hz, an Ornstein–Uhlenbeck (OU) process $\xi_{\a}(t),\xi_{\b}(t)$ with 0.1 std and a fast time constant. The observed firing rate for head directional cells was $\widehat r_{\a}(t)=\mathrm{ReLU(r_{\a}(t) + \xi_{\a}(t))}$. Under the null, we constructed conjunctive cells from grid cells $\B$ and an independent set of head directional cells $\A'$ as $\widehat r_{\b}(t)=\mathrm{ReLU(\widehat r_{\a'}(t) + r_{\b}(t) + \xi_{\b}(t))}$. Under the alternative, we constructed the conjunctive cells as $\widehat r_{\b}(t)=\mathrm{ReLU(\widehat r_{\a}(t) + r_{\b}(t) + \xi_{\b}(t) - 1)}$ ($-1$ makes sure the cells are active only when both the head direction and position are correct). We simulated the rat for several minutes for each random seed, subsampling the points so the autocorrelation of the OU process is negligible to get approximately i.i.d. data. The exact data generation process is provided in the code.

In the main text, \cref{fig:rat_standard,fig:rat_aux} show the best performing settings w.r.t. data splits for each presented algorithm, with SplitKCI using the \cref{alg:train_test_split} to determine the split ratio in the standard data regime. We also evaluated all methods for fixed data splits (\cref{app:fig:rat_std_100_05,app:fig:rat_std_no_split_control,app:fig:rat_aux_200,app:fig:rat_aux_400}). Omitted in the main text, CIRCE showed reasonable Type I performance along with very poor Type II performance (purle), and SplitKCI without splitting over $\mubc$ (Split KCI($\A$ only), green, in the figures) performed very closely to SplitKCI (however, it was sensitive to train/test split ratio and incompatible with our splitting heuristic). 

\subsubsection{Results without train/test splitting}\label{app:subsubseq:control}
\changed{
We also evaluated the methods without the train/test splits, i.e. using all data for both training and testing. The results are shown in \cref{app:fig:rat_std_no_split_control}A-B. As expected, GCM holds level and quickly gains power as the number of points $N$ increases. CIRCE shows pathological performance as it always accepts the null. The other tests struggle to hold level.

To confirm why this is happening, we repeated the same experiment, but now resampling the test sets; see \cref{app:fig:rat_std_no_split_control}C-D. That is, the number of train/test points was the same as in panels A-B, but the test set was independent of the training one. KCI did not improve performance, showing that the balance of test and train points is indeed important. Interestingly, SplitKCI could hold level for an independent set of points. This could be explained by additional correlations for in-sample errors: if the CME estimation for a given training point is poor for both regressions (i.e. $\C\rightarrow \A$ and $\C\rightarrow \B$), it will add a large error term to the test statistic (which is more likely for SplitKCI as it's using fewer points than KCI). For an independent test set, such errors are more likely to be uncorrelated.
}

\subsection{Car insurance data}
\label{app:sec:real_data}

For the car insurance data (full dataset, i.e. merged for all companies), we compute the $p$-values as in \cite{polo2023conditional}. The train/test split of the dataset is done as 70/30\%.  For KCI and SplitKCI, we further split the train set in half for each $\mu_{\A\given\C}$ and $\mu_{\B\given\C}$ regression due to increased memory requirements compared to other methods. 

\clearpage

\vskip 0.2in
\bibliography{bibliography}

\end{document}